\documentclass{article} 
\usepackage{iclr2027_conference,times}

\usepackage{amsmath,amsfonts,bm}

\def\eqref#1{equation~\ref{#1}}

\def\1{\bm{1}}

\DeclareMathAlphabet{\mathsfit}{\encodingdefault}{\sfdefault}{m}{sl}
\SetMathAlphabet{\mathsfit}{bold}{\encodingdefault}{\sfdefault}{bx}{n}

\usepackage{hyperref}
\usepackage{url}
\usepackage{amssymb}
\usepackage{booktabs}
\usepackage{amsthm}
\newtheorem{theorem}{Theorem}
\newtheorem{proposition}{Proposition}
\newtheorem{corollary}{Corollary}

\usepackage{algorithm}
\usepackage{algpseudocode}
\usepackage{xcolor}
\usepackage{graphicx}
\usepackage{subcaption}
\usepackage{makecell}
\usepackage{array}
\usepackage{comment}
\usepackage{enumitem}
\usepackage{wrapfig}
\usepackage{caption}
\usepackage{multirow}
\newcommand{\topic}[1]{\noindent\textbf{#1}}
\newcommand{\ourmethod}{TabMosaic}

\title{%
    \ourmethod{}: Unified In-Context Diffusion \\for Tabular Generative Tasks}

\author{
Jayoung Kim\textsuperscript{1}, Sanghyun Hong\textsuperscript{2}, Noseong Park\textsuperscript{1}\\
\textsuperscript{1}KAIST, \textsuperscript{2}Oregon State University\\
\texttt{\{jayoung.kim, noseong\}@kaist.ac.kr, sanghyun.hong@oregonstate.edu}
}
\iclrfinalcopy 
\begin{document}
\addtocontents{toc}{\protect\setcounter{tocdepth}{-1}}

\maketitle

\begin{abstract}
Generative modeling for tabular data 
has advanced to the point where models can synthesize
realistic records, recover missing values, and generate samples 
conditioned on class labels or observed attributes.
Despite recent advances, existing approaches 
either require table-specific training or
restrict in-context learning to single-cell prediction.
In this work, we formulate these tabular generative tasks as a single problem:
modeling the joint conditional distribution of an arbitrary set of target cells
given the observed cells of an unseen table.
To this end, we turn to diffusion, 
whose denoising objective is defined jointly 
over all target cells rather than one cell at a time.
Under the pretraining distribution, this objective is minimized by 
the posterior predictive distribution over the target set.
Building on this, we propose \ourmethod{}, 
a unified in-context diffusion model for generating mixed-type target cells 
in unseen, potentially incomplete tables without table-specific training.
We further show that its pretraining objective 
bounds the conditional negative log-likelihood 
and the divergence from the posterior predictive for arbitrary target sets,
extending the Bayesian account of in-context learning 
from a single-value prediction to joint multi-cell generation.
Across 48 real-world tables, \ourmethod{} 
ranks first on all evaluated generative tasks 
against 20 baseline approaches.
%
\end{abstract}

\section{Introduction}
\label{sec:intro}

%
In domains such as healthcare, finance and scientific discovery,
tabular data is widely used for data-driven decision-making
~\citep{borisov2022deep, van2024tabular}.
Recent 
tabular generation models synthesize 
realistic rows, recover missing values, 
and generate samples conditioned on classes or attributes%
~\citep{kotelnikov2023tabddpm, kim2023stasy, zhang2024tabsyn, 
        shi2025tabdiff, zhang2025tabnat, jarrett2022hyperimpute, 
        du2024remasker, zhang2025diffputer}.
Yet these models are typically dataset-specific:
they must be trained or fine-tuned separately for each new table,
requiring a new training run whenever the dataset changes.
This motivates a natural question:

\begin{center}
\emph{Can a single pretrained model perform
      diverse tabular generative tasks on unseen tables \\ 
      without table-specific training, using only in-context information?}
\end{center}

Recent work has been moving toward this goal
by transferring tabular models across datasets.
Cross-table generative models pretrain across many tables,
reducing the need to learn each dataset entirely from scratch,
but still require fine-tuning on each new one%
~\citep{lin2025ctsyn, jiang2026tabular}.
In-context tabular models remove this adaptation step
and operate directly on unseen tables%
~\citep{hollmann2025accurate, qu2025tabicl}.
However, these models are trained for individual prediction targets
rather than the joint distribution over an arbitrary set of cells, 
so multi-cell generation and imputation must be assembled from 
separate predictions rather than modeled directly.
No existing work provides a single pretrained model 
that learns the joint conditional distribution over arbitrary target sets 
and applies it directly to unseen tables without table-specific adaptation.

\begin{figure}[t]
\centering
\includegraphics[width=0.9\linewidth]{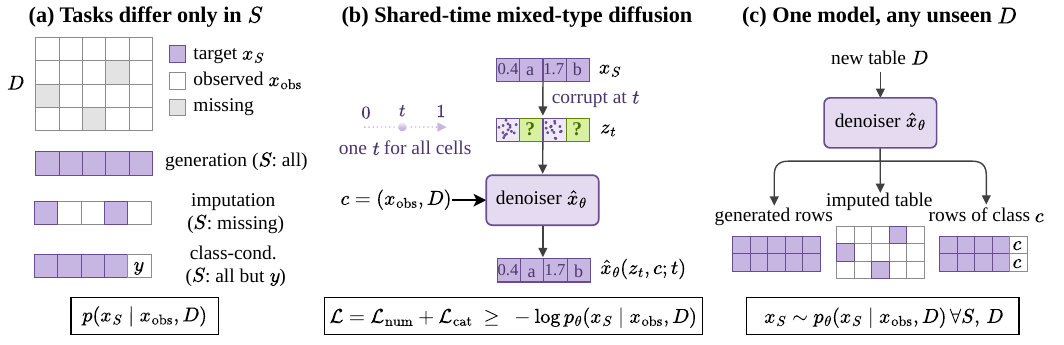}
\caption{\textbf{Overview of our proposed framework.}
(a) A unified formulation expresses generation, imputation 
and class-conditional generation as the same conditional query,
only differing in the target set $S$.
(b) A shared-time diffusion objective 
jointly models mixed numerical and categorical targets.
(c) Our \ourmethod{} applies directly 
to unseen tables without table-specific training.}
\label{fig:overview}
\vspace{-1.5em}
\end{figure}

\topic{Our contributions.}
In this work, we formulate tabular generation, imputation 
and conditional generation as a single problem:
learning the joint conditional distribution 
over arbitrary target sets on unseen tables.
This formulation shifts tabular generative modeling 
from training a new model for each dataset and task 
to learning a reusable conditional model once across datasets.
Different generative tasks are specified simply 
by which cells are observed and which are queried.

To this end, we \emph{first} formulate 
a unified learning objective for tabular generative tasks.
Generation, imputation, and conditional generation 
are treated as the same family of conditionals,
differing only in which cells are designated as targets.
We instantiate this objective 
with in-context diffusion over arbitrary sets of
mixed numerical and categorical target cells,
without imposing an autoregressive order.
We show that, under the pretraining prior, 
its population optimum is the posterior predictive distribution.
We derive a variational upper bound 
on the conditional negative log-likelihood.
Under the pretraining prior,
the bound also controls the KL divergence 
from the posterior predictive distribution 
to the model distribution, up to a model-independent constant.

\emph{Second}, we introduce \ourmethod{},
an in-context diffusion model 
that realizes this objective on unseen tables.
The model jointly captures dependencies 
among numerical and categorical cells 
while conditioning on the observed cells and context.
To scale \ourmethod{} to long and incomplete tables,
we develop a two-staged pretraining strategy:
the model first learns from short, complete tables and 
is then extended to long, partially observed contexts.
In the second stage, multiple corruptions share a single context encoding,
reducing computation and gradient variance.
Missing context cells are represented explicitly,
so the model can condition directly on incomplete tables at inference time.

\emph{Third}, we evaluate \ourmethod{} on 
generation, imputation, and class-conditional generation across the 48 datasets 
from a recent tabular benchmark~\citep{erickson2026tabarena}.
Against 20 baselines spanning dataset-specific generators and imputers,
in-context predictors, and training-free conditional samplers,
our single pretrained \ourmethod{} \emph{ranks first} across all three tasks.
We also compare \ourmethod{} against 
the exact posterior predictive distribution 
on synthetic priors for which the posterior is available in closed form.
For imputation, we evaluate performance 
under different missingness patterns and mechanisms, 
such as settings where entire columns are missing.
We conduct an ablation study 
to assess the contribution of the staged pretraining strategy.

\section{Background and Related Work}
\label{sec:related}


\subsection{In-context learning as posterior prediction}
\label{sec:related-icl}

A sequence model trained by maximum likelihood on data drawn from a mixture over latent tasks
approximates, at its optimum, the Bayesian posterior predictive under that mixture: conditioning
on a prompt plays the role of conditioning on observations, and the model marginalizes the latent
task rather than inferring it explicitly. This is the reading of in-context learning as implicit Bayesian inference~\citep{xie2021explanation}, and the
basis of a line of work that trains sequence models on synthetic priors and compares them
with the Bayes-optimal predictor on tasks whose posterior is available in closed
form~\citep{garg2022can, raventos2023pretraining, panwar2024context}. Tables fit this view, since the rows of a table are exchangeable draws from an unknown mechanism, and prior-data fitted networks~\citep{muller2022transformers} make it concrete for one cell: a transformer pretrained on synthetic tables drawn from a prior $p(\phi)$ over data-generating mechanisms approximates, for a query row with features $x$, label $y$, and context rows $D$, the posterior predictive distribution under the prior:
\begin{equation}
    p(y \mid x, D) \;=\; \int p(y \mid x, \phi)\; p(\phi \mid D)\; d\phi
\label{eq:ppd-single}
\end{equation}
The same argument applies to any target set $x_S$, provided that the pretraining objective is the likelihood of $x_S$ itself.

\vspace{-0.5em}

\subsection{Generative models for tables}
\label{sec:related-diffusion}

\topic{Per-dataset models.}
Diffusion models pair a fixed corruption process with a learned reverse process, Gaussian for continuous data~\citep{ho2020denoising, song2021score, karras2022elucidating} and masked for discrete data~\citep{austin2021structured, sahoo2024simple}; GIDD~\citep{vonrutte2025generalized} interpolates the mask with uniform noise, which yields a self-correcting sampler with an exact ELBO, and we use the EDM~\citep{karras2022elucidating} and GIDD kernels (\S\ref{sec:method-model}).
For tables, generators of this kind are trained per dataset, in score-based, latent, mixed-type continuous-time diffusion model forms~\citep{kim2023stasy, zhang2024tabsyn, shi2025tabdiff} or with gradient-boosted trees in place of a network~\citep{jolicoeur2024generating}; TabNAT pairs a masked transformer for discrete columns with a diffusion loss for continuous ones~\citep{zhang2025tabnat}.
Per-dataset imputers follow the same pattern~\citep{van1999flexible, stekhoven2012missforest, yoon2018gain, jarrett2022hyperimpute, du2024remasker, zhang2025diffputer}.
Each recovers the conditionals over a target set in one of two ways:
i) Unconditional models are conditioned after training by inpainting, which fixes the given cells at every reverse step; the result approximates the conditional of the trained joint rather than sampling it, and the released procedures cover a single target column or cells of one type~\citep{zhang2024tabsyn, shi2025tabdiff, jolicoeur2024generating}; TabDDPM instead conditions on one column designated before training, the class label~\citep{kotelnikov2023tabddpm}.
ii) TabNAT~\citep{zhang2025tabnat} and GReaT~\cite{borisov2022language} train conditionals of a random target set given the rest by autoregression over a random order, which imposes an order on unordered set of columns; each order is a different factorization, and no term of the objective makes them agree.
Both are trained on a complete table, so the context itself cannot contain missing cells.

\topic{Pretraining across tables.}
CTSyn~\citep{lin2025ctsyn} pretrains an autoencoder into a shared latent space and a schema-conditioned latent diffusion model across tables, and fine-tunes both on each new table; TabFORGE~\citep{jiang2026tabular} likewise fine-tunes per table.
Both target unconditional generation, and neither answers a query on an unseen table without a gradient update.

\topic{In-context predictors.}
Models pretrained to predict single cells also impute and generate without training on the table: missing cells are filled by chaining a point predictor over them, and rows are generated by chaining the predictor over the columns, optionally followed by rounds of masking and refilling~\citep{hollmann2025accurate, zhang2025limix}.
The objective is a sum of per-cell log-likelihoods given the unmasked ones, so no term involves a target set as a whole, and a generated row is a chain in a chosen order.

\topic{Energy samplers on a classifier.}
TabEBM~\citep{margeloiu2024tabebm} and TabPFGen~\citep{ma2024tabpfgen} use the logits of a pretrained tabular classifier as an energy and draw class-conditional rows by Langevin dynamics; TabPFGen also imputes by running the same Langevin dynamics on the missing cells only, with the observes cells held fixed and the label given.
The classification objective constrains only the class posterior and leaves the distribution over the cells free~\citep{grathwohl2019your}, every query must supply a label, and the model has no likelihood.

\vspace{-0.5em}
\section{A Unified Objective for Tabular Generative Tasks}
\label{sec:method}

\topic{Setting.}
A tabular task instance consists of a \emph{context} set $D = \{r_1, \dots, r_n\}$ of $n$ rows and a query row $x = (x_1, \dots, x_F)$ over the same $F$ columns, where each $r_i \in \mathbb{R}^F$ is a row of the table whose cells may be missing, and the cells of $x$ split into an observed part
$x_{\mathrm{obs}}$ and a \emph{target set} $S\subseteq \{1, \cdots, F\}$ of cell indices of the row, $S = S_{\mathrm{num}} \cup S_{\mathrm{cat}}$ of numerical and categorical indices. Multiple query rows are handled identically and independently
given $D$, and therefore we describe a single row. We write $c = (x_{\mathrm{obs}}, D)$ for the
conditioning information. Note that the tasks we consider are instances of a
\emph{single} distributional object, the conditional joint distribution over the target cells,
and they differ only in the choice of $S$, as shown in Fig.~\ref{fig:overview} (a).



\subsection{One family of tasks: conditional queries over cell sets}
\label{sec:method-tasks}

We pose all of the tabular generative tasks above as learning the family
$\{\, p(x_S \mid x_{\mathrm{obs}}, D) \,\}_{S}$ with one model. 
Under a prior $p(\phi)$ over data-generating mechanisms, 
where the rows of $D$ and the query row are drawn i.i.d.\ from
$p(\cdot \mid \phi)$, the ideal member of this family is the 
posterior predictive distribution of Eq.~\ref{eq:ppd-single},
generalized from a single value to an arbitrary cell set:
\begin{equation}
p(x_S \mid x_{\mathrm{obs}}, D)
\;=\; \int p(x_S \mid x_{\mathrm{obs}}, \phi)\; p(\phi \mid D)\; d\phi.
\label{eq:ppd}
\end{equation}
Learning this family imposes three requirements, which drive our design: 
i) a valid \emph{joint} distribution for every mixed-type target set $S$; ii) a \emph{single} pretrained network that
serves every $(S, x_{\mathrm{obs}})$ in-context; and iii) a training objective that provably targets Eq.~\ref{eq:ppd}.

\subsection{In-context joint diffusion over cell sets}
\label{sec:method-model}

\paragraph{Forward process.}
The diffusion state at time $t \in [0,1]$ is
$z_t = (z_t^{\mathrm{num}}, z_t^{\mathrm{cat}}) \in \mathbb{R}^{|S_{\mathrm{num}}|} \times
\prod_{j \in S_{\mathrm{cat}}} \mathcal{V}_j$, 
where $\mathcal{V}_j$ is the class set of column $j$ 
augmented with a \textsc{mask} token. We corrupt only the target cells; 
the observed cells and the context remain clean. 
Given the clean values $x_S$, the forward marginal factorizes over
cells on a \emph{shared} time axis, i.e., a Gaussian (EDM) kernel on 
numerical cells and a GIDD kernel on categorical cells:
\begin{equation}
q_t(z_t \mid x_S)
= \prod_{j \in S_{\mathrm{num}}} \mathcal{N}\,\big(z_{t,j};\, x_j,\, \sigma(t)^2\big)
\;\cdot \prod_{j \in S_{\mathrm{cat}}} \mathrm{Cat}\,\big(z_{t,j};\, \alpha_t e_{x_j} + \beta_t \pi_t\big),
\label{eq:forward}
\end{equation}
where $\sigma(t)$ is the EDM noise schedule, $e_{x_j}\in \mathbb{R}^{|\mathcal{V}_j|}$ 
is the one-hot vector of the clean class $x_j$, which is never \textsc{mask}, 
$\beta_t = 1-\alpha_t$, and $\pi_t$ is the mixing distribution of GIDD~\citep{vonrutte2025generalized}, a mixture of \textsc{mask} and uniform noise whose uniform component lets the reverse process revise cells that have already been unmasked (See Appendix~\ref{app:kernels-gidd}).

\paragraph{Reverse process.}
A single denoiser $\hat{x}_\theta(z_t, c; t)$ reads the \emph{entire} mixed state together with
the conditioning $c=(x_\mathrm{obs}, D)$, and outputs a scalar estimate for each numerical target cell and a class distribution for each categorical target cell. On a time grid $1 = t_N > \dots > t_0 = 0$, each reverse step is the posterior of the forward process with the clean target value replaced by the denoiser output, and every factor is conditioned on the full mixed state:
\begin{equation}
p_\theta(z_s \mid z_t, c)
= \prod_{j \in S} q\big(z_{s,j} \mid z_{t,j},\, \hat{x}_{\theta,j}(z_t, c; t)\big).
\label{eq:step}
\end{equation}

The model is the composition of the reverse steps, started from the terminal distribution $p(z_{t_N})$, which is independent of the data (Gaussian noise on numerical cells, \textsc{MASK} on categorical cells):
\begin{equation}
p_\theta(x_S \mid c)
= \int p(z_{t_N}) \prod_{i=N}^{1} p_\theta\big(z_{t_{i-1}} \mid z_{t_i}, c\big)\, dz_{t_1:t_N}.
\label{eq:model}
\end{equation}

\begin{proposition}[Validity]
\label{prop:valid}
For every $S$, $c$, and $\theta$, Eq.~\ref{eq:model} defines a normalized probability distribution on the mixed product space. The proof is in Appendix~\ref{app:proofs}.
\end{proposition}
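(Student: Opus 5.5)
The plan is to show that every factor in Eq.~\ref{eq:model} is a Markov kernel on the mixed product space $\mathcal{Z} = \mathbb{R}^{|S_{\mathrm{num}}|} \times \prod_{j\in S_{\mathrm{cat}}} \mathcal{V}_j$. A composition of Markov kernels started from a probability measure is again a probability measure. Normalization then follows from Tonelli's theorem: integrating out $z_{t_0}$, then $z_{t_1}$, and so on up to $z_{t_N}$ gives $1$ at each stage. Throughout, ``integration'' over $\mathcal{Z}$ means Lebesgue integration in the numerical coordinates combined with summation over the finite categorical coordinates, i.e.\ with respect to the product of Lebesgue and counting measure. The final step $t_1\to t_0=0$ lands on clean values $x_S$: the numerical coordinates are treated as a point mass or the $\sigma(0)$-limit, and the categorical coordinates lie in $\mathcal{V}_j\setminus\{\textsc{mask}\}$. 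I would handle this by noting that the last kernel is the forward posterior at $s=0$, which is supported on clean values.

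\textbf{Step 1: the terminal distribution.} The distribution $p(z_{t_N})$ is a product of a standard Gaussian density (scaled by $\sigma(1)$) on the numerical cells and a point mass on \textsc{mask} for each categorical cell. It is nonnegative and integrates to $1$, and it does not depend on $\theta$ or $c$.

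\textbf{Step 2: each reverse step is a Markov kernel.} Fix $z_t$ and $c$. The denoiser output $\hat{x}_{\theta,j}(z_t,c;t)$ is a real scalar for a numerical $j$ and a probability vector (a softmax) over the clean classes for a categorical $j$. I need the following:
\begin{enumerate}[label=(\roman*)]
\item For a numerical cell, the EDM posterior $q(z_{s,j}\mid z_{t,j},\hat{x})$ is a Gaussian density for any real $\hat{x}$, or a deterministic ODE map, which is also a point-mass kernel.
\item For a categorical cell, the GIDD posterior with $e_{x_j}$ replaced by the predicted distribution $\hat{x}$ is the mixture $\sum_{k}\hat{x}_k\, q(z_{s,j}\mid z_{t,j},x_j=k)$. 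Each component is a genuine forward posterior, obtained from Bayes' rule on the forward joint $q(z_s,z_t\mid x)$. Hence each component is a distribution on $\mathcal{V}_j$, and so is a convex combination of them.
\end{enumerate}
I would spell out from Appendix~\ref{app:kernels-gidd} that the GIDD posterior is linear in the clean one-hot, so that substituting a probability vector is exactly this mixture. This identification is the main technical obstacle. If the paper's parametrization is not literally linear, I would instead appeal to the fact that the closed form is nonnegative and sums to $1$ for every point of the simplex, which is what GIDD's derivation provides. A product over $j\in S$ of per-cell probability kernels is a probability kernel on $\mathcal{Z}$; crucially, every factor may depend on the full state $z_t$ without affecting this, because $z_t$ is held fixed in the step.

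\textbf{Step 3: measurability and assembly.} I assume the network is a measurable (indeed continuous) function of $z_t$. Then the map $z_t\mapsto p_\theta(\cdot\mid z_t,c)$ is measurable, so each step is a bona fide kernel and the iterated integral in Eq.~\ref{eq:model} is well defined and nonnegative. Applying Tonelli to integrate the chain from $z_{t_0}$ upward collapses each kernel to $1$ and leaves $\int p(z_{t_N})\,dz_{t_N}=1$. Hence $p_\theta(x_S\mid c)$ is a normalized distribution on the mixed product space for every $S$, $c$, and $\theta$.
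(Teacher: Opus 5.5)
Your proposal is correct and follows the paper's proof. Each factor of Eq.~\ref{eq:step} is a normalized per-cell law for any conditioning arguments, so each reverse step is a Markov kernel on the mixed space; composing these kernels from the normalized terminal law and marginalizing the intermediate states gives a normalized law on $x_S$ (your Tonelli and measurability remarks make explicit what the paper leaves implicit). One small correction to Step~2(ii): the GIDD posterior is a ratio of two functions each linear in the one-hot, so substituting $\hat{x}$ gives a mixture of forward posteriors with weights $\hat{x}_k\, q_t(z_{t,j} \mid k) / \sum_{k'} \hat{x}_{k'}\, q_t(z_{t,j} \mid k')$, not $\hat{x}_k$ (the two differ on non-\textsc{mask} states because of the uniform component), but it is still a convex combination of normalized PMFs, so your fallback argument is the right one.
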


We highlight three properties of the proposed construction. i) The joint is factorized along the \emph{time axis}, not along a cell or modality ordering, so no ordering is chosen and the two modalities are coherent by construction (Proposition~\ref{prop:valid}). ii) Dependencies between cells, including across modalities, arise from conditioning: each factor of Eq.~\ref{eq:step} reads the current state of every other cell, and the composition over steps transports this information. iii) With a single step ($N{=}1$) the model reduces to the conditionally independent product $\prod_{j \in S} p_\theta(x_j \mid c)$; the iteration is what creates the coupling.

\subsection{Training objective and guarantee}
\label{sec:method-objective}

\paragraph{Target patterns.}
During pretraining the target pattern is drawn independently of the cell values and of $t$: with probability $0.4$ all query cells are targets; otherwise each cell is a target independently with rate $\rho \sim \mathcal{U}(0,1)$. A single $t$ then corrupts all target
cells. Drawing $S$ and $t$ independently is valid because the bound of Theorem~\ref{thm:elbo} holds for every target
set separately, i.e., $S$ only selects which cells enter it and $t$ ranges over $[0, 1]$ for each $S$.
As a result, every combination of target set and noise level lies inside the training distribution, so a query can start its reverse process at any $t$ and for any fraction of target cells.

\paragraph{Loss.}
Let $\lambda(\sigma)$ be the EDM weight and $w(z_{t,j})$ be the state-dependent GIDD weights.
Our pretraining loss is defined as follows:
\begin{equation}
\mathcal{L} = \mathbb{E}_{(x \cup D) \sim p(\mathcal{D})}\,
\mathbb{E}_{S}\, \mathbb{E}_{t}\, \mathbb{E}_{z_t \sim q_t}
\big[\, \tfrac{1}{|S|}\left( \mathcal{L}_{\mathrm{num}} + \mathcal{L}_{\mathrm{cat}}\right)  \,\big],
\label{eq:loss}
\end{equation}
\begin{equation}
\mathcal{L}_{\mathrm{num}}
= \!\sum_{j \in S_{\mathrm{num}}}\!\! \lambda(\sigma(t))
\big(\hat{x}_{\theta,j} - x_j\big)^2,
\qquad
\mathcal{L}_{\mathrm{cat}}
= \!\sum_{j \in S_{\mathrm{cat}}}\!\! w(z_{t,j})
\big[D_{\mathrm{KL}, j} + D_{\mathrm{IS},j}\big],
\label{eq:loss-terms}
\end{equation}
where $D_{\mathrm{KL},j}$ and $D_{\mathrm{IS},j}$ are the divergence terms of the GIDD kernel. We
emphasize that the per-cell sums are \emph{not} a factorization of the model: they arise since
the per-step distributions in Eq.~\ref{eq:forward} and Eq.~\ref{eq:step} factorize given $z_t$,
whereas the joint distribution in Eq.~\ref{eq:model} does not.

\begin{theorem}[Conditional ELBO]
\label{thm:elbo}
For fixed $(x, S, c)$, let $\ell^\star_\theta(x, S, c)$ be the inner expectation
$\mathbb{E}_t\,\mathbb{E}_{z_t}[\cdot]$ of Eq.~\ref{eq:loss} without the normalization by
$|S|$ and with the numerical weight $\lambda$ replaced by the likelihood weight
$w_{\mathrm{lik}}(t) = \sigma'(t)/\sigma(t)^3$; the categorical weights are already those of the
likelihood bound. Then $\ell^\star_\theta(x, S, c) \ge -\log p_\theta(x_S \mid c) - C$, where
$C$ is independent of $\theta$, i.e., $-\ell^\star_\theta - C$ is an evidence lower bound on
$\log p_\theta(x_S \mid c)$. The proof is in Appendix~\ref{app:proofs}.
\end{theorem}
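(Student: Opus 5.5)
The plan is the standard variational argument for diffusion models, run conditionally on $c$ with the forward process $q$ used as the variational posterior over the latent trajectory. First, fix $(x,S,c)$ and work on the time grid $1=t_N>\dots>t_0=0$. Write the forward joint as $q(z_{t_1:t_N}\mid x_S)$, the Markov chain whose marginals are Eq.~\ref{eq:forward}; it factorizes over cells and its transitions are Gaussian (EDM) and GIDD kernels. By Proposition~\ref{prop:valid}, $p_\theta(x_S\mid c)$ in Eq.~\ref{eq:model} is a normalized marginal, so Jensen's inequality with $q$ as importance distribution gives
\begin{equation*}
-\log p_\theta(x_S\mid c)\le \mathbb{E}_q\Big[-\log p_\theta(x_S\mid z_{t_1},c)+ D_{\mathrm{KL}}\big(q(z_{t_N}\mid x_S)\,\|\,p(z_{t_N})\big)+\sum_{i=2}^{N} D_{\mathrm{KL}}\big(q(z_{t_{i-1}}\mid z_{t_i},x_S)\,\|\,p_\theta(z_{t_{i-1}}\mid z_{t_i},c)\big)\Big].
\end{equation*}
The prior term does not depend on $\theta$ because $p(z_{t_N})$ is data-independent. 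It goes into $C$, together with the reconstruction term, which I handle by a limiting argument below.

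Second, decompose each KL term cellwise. Both $q(z_s\mid z_t,x_S)$ and $p_\theta(z_s\mid z_t,c)$ in Eq.~\ref{eq:step} are products over $j\in S$ of kernels of the same family. They differ only in whether the clean value $x_j$ or the denoiser output $\hat x_{\theta,j}(z_t,c;t)$ is plugged in. The KL therefore splits into a sum of per-cell KLs, and this is exact even though each factor of $p_\theta$ depends on the whole state $z_t$, since $z_t$ is being conditioned on.

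For numerical cells, the Gaussian posterior KL is $\tfrac{(\sigma_t^2-\sigma_s^2)}{2\sigma_s^2\sigma_t^2}(\hat x_{\theta,j}-x_j)^2$, plus terms independent of $\theta$ if the variances match. For categorical cells, I invoke the GIDD per-step KL directly.

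Third, let $N\to\infty$ with vanishing mesh. The Riemann sums converge as follows:
\begin{itemize}
\item The numerical terms converge to $\int \tfrac{\sigma'(t)}{\sigma(t)^3}(\hat x_{\theta,j}-x_j)^2\,dt$, which is the weight $w_{\mathrm{lik}}$ of the statement, up to the factor convention.
\item The categorical terms converge to the continuous-time GIDD ELBO integrand $w(z_{t,j})[D_{\mathrm{KL},j}+D_{\mathrm{IS},j}]$, as established by \citet{vonrutte2025generalized}.
\end{itemize}
Rewriting $\sum_i$ as $\mathbb{E}_t$ over $t\sim\mathcal U[0,1]$ gives exactly $\ell^\star_\theta$. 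Summing over $S_{\mathrm{num}}\cup S_{\mathrm{cat}}$ gives exactly the unnormalized inner expectation of Eq.~\ref{eq:loss}. Because the target cells are not normalized by $|S|$, the bound holds for each $S$ separately, as claimed.

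The main obstacle is the endpoint handling for the mixed state. Two issues arise:
\begin{itemize}
\item On the continuous part, the reconstruction term at $t_0=0$ with $\sigma(0)>0$ is a density rather than a probability. I would follow EDM/VDM practice and truncate at $\sigma_{\min}$, using a fixed Gaussian decoder. Its log-normalizer depends only on $\sigma_{\min}$ and so enters $C$; the residual squared error is absorbed as the $t\to 0$ endpoint of the integral.
\item On the discrete part, GIDD has $\alpha_0=1$, so the reconstruction is exact.
\end{itemize}
I must also check that the shared time axis causes no cross terms. It does not, because the forward kernels are independent across cells given $x_S$ and all cells use the same $t$. The limit must also be justified, via monotone or dominated convergence under bounded denoiser outputs, so that the continuous-time quantity still upper-bounds $-\log p_\theta$ for the model defined by sufficiently fine grids.
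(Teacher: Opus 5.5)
Your proposal is correct and follows essentially the same route as the paper's proof. The shared steps are:
\begin{itemize}
\item Jensen's inequality with the forward chain as variational posterior;
\item the standard prior/step/reconstruction decomposition;
\item cellwise splitting of each step KL, since both arguments factorize given $z_{t_i}$;
\item a same-variance Gaussian KL $\tfrac12(\sigma_s^{-2}-\sigma_t^{-2})(\hat{x}_{\theta,j}-x_j)^2$, whose coefficient equals $\int_s^t \sigma'(u)/\sigma(u)^3\,du$ exactly, so no ``factor convention'' caveat is needed;
\item the GIDD continuous-time ELBO for the categorical cells.
\end{itemize}
Your endpoint treatment is, if anything, more explicit than the paper's. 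You absorb the $\theta$-dependent part of the numerical reconstruction term as the $t\to0$ slice of the integral, and you flag that the $N\to\infty$ limit needs a convergence argument. The paper instead folds the reconstruction term into $C$ as a ``discretization residue'' and passes to the continuous-time limit without justification.
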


\begin{corollary}[The objective targets the posterior predictive distribution]
\label{cor:ppd}
Let tables be drawn from the prior and $S$ independently of the cell values. Then
\begin{equation*}
\mathbb{E}\big[\mathrm{KL}\big(p(x_S \mid x_{\mathrm{obs}}, D)\,\big\|\,
p_\theta(x_S \mid x_{\mathrm{obs}}, D)\big)\big]
\;\le\; \mathbb{E}\big[\ell^\star_\theta\big] - C',
\end{equation*}
where $p(x_S \mid x_{\mathrm{obs}}, D)$ is the posterior predictive of Eq.~\ref{eq:ppd} and $C'$ is
independent of $\theta$. Moreover, $\mathcal{L}$ and $\mathbb{E}[\ell^\star_\theta]$ are
minimized by the same denoiser: $\hat{x}_{\theta,j} = \mathbb{E}[x_j \mid z_t, c]$ for numerical
cells and the posterior PMF $p(x_j \mid z_t, c)$ for categorical cells. The proof is in
Appendix~\ref{app:proofs}.
\end{corollary}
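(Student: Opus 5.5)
The plan is to derive the KL bound by taking expectations of Theorem~\ref{thm:elbo} under the prior. The minimizer claim is handled separately, by pointwise minimization of the per-$t$ integrands.

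\emph{Step 1 (KL bound).} Fix $S$, which is drawn independently of the cell values. Under the prior, $(x_S, x_{\mathrm{obs}}, D)$ has a joint law in which the conditional of $x_S$ given $c=(x_{\mathrm{obs}},D)$ is exactly the posterior predictive of Eq.~\ref{eq:ppd}. This holds because rows are i.i.d.\ given $\phi$, so
\begin{equation*}
p(x_S\mid x_{\mathrm{obs}},D)=\int p(x_S\mid x_{\mathrm{obs}},\phi)\,p(\phi\mid x_{\mathrm{obs}},D)\,d\phi .
\end{equation*}
We read Eq.~\ref{eq:ppd} with $p(\phi\mid D)$ understood as conditioning on all available information. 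Then
\begin{equation*}
\mathbb{E}\big[\mathrm{KL}(p(\cdot\mid c)\,\|\,p_\theta(\cdot\mid c))\big]
=\mathbb{E}\big[-\log p_\theta(x_S\mid c)\big]-\mathbb{E}\big[-\log p(x_S\mid c)\big].
\end{equation*}
The second term is a conditional entropy (or differential entropy on the numerical part) that does not involve $\theta$. Applying Theorem~\ref{thm:elbo} pointwise gives $-\log p_\theta\le \ell^\star_\theta + C$, and taking expectations yields the claim with $C' = -\mathbb{E}[C] + \mathbb{E}[-\log p(x_S\mid c)]$. Both pieces are $\theta$-free. The one technical point is integrability: we assume the conditional entropy is finite, which holds under a mild moment condition on the prior so that the subtraction is legitimate. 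Averaging over $S$ preserves the inequality because $S$ is independent of the values.

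\emph{Step 2 (common minimizer).} Write both $\mathcal{L}$ and $\mathbb{E}[\ell^\star_\theta]$ as
\begin{equation*}
\mathbb{E}_{S,t}\,\mathbb{E}_{c,z_t}\Big[\textstyle\sum_j \omega_j(t,z_t)\,\mathbb{E}[\text{per-cell term}\mid z_t,c]\Big],
\end{equation*}
where the $\omega_j$ are nonnegative weights. For $\mathcal{L}$ these are $\lambda/|S|$ or $w/|S|$; for $\ell^\star_\theta$ they are $w_{\mathrm{lik}}$ or $w$. If the denoiser is unrestricted, the output $\hat x_{\theta,j}(z_t,c;t)$ can be chosen separately for each input $(z_t,c,t,S)$. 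Each weighted integrand is then minimized pointwise, and a positive weight does not change the pointwise argmin.
\begin{itemize}
\item For numerical cells, $\mathbb{E}[(a-x_j)^2\mid z_t,c]$ is minimized at $a=\mathbb{E}[x_j\mid z_t,c]$.
\item For categorical cells, we must show that $D_{\mathrm{KL},j}+D_{\mathrm{IS},j}$, averaged over $x_j\sim p(x_j\mid z_t,c)$, is minimized when the predicted PMF equals $p(x_j\mid z_t,c)$.
\end{itemize}

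\emph{Main obstacle.} The categorical case in Step 2 is the hard part. For the GIDD ELBO, the term is a KL divergence (plus Itakura--Saito) between the true forward rate toward $z_t$ and the rate induced by the predicted $\hat x$. That rate is affine in $\hat x$, since the forward marginal $\alpha_t\hat x+\beta_t\pi_t$ is linear in the clean distribution. I would argue as follows. The expected integrand is a Bregman-type divergence generated by the convex function $u\log u$ (respectively $-\log u$). Bregman divergences averaged over a random first argument are minimized, in the second argument, at the mean of the first argument. Affine maps commute with that mean, so the optimal rate is obtained by substituting $\mathbb{E}[e_{x_j}\mid z_t,c]=p(x_j\mid z_t,c)$ for $\hat x$. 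Strict convexity gives uniqueness wherever $\omega_j>0$. This needs the explicit GIDD form from Appendix~\ref{app:kernels-gidd}, which I would check carefully.
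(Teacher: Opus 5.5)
Your Step 1 is the paper's proof. You take the expectation of Theorem~\ref{thm:elbo}, identify the law of $x_S$ given $(S,x_{\mathrm{obs}},D)$ with the posterior predictive, and split the cross-entropy so that a $\theta$-free entropy is absorbed into $C'$. Your correction that the mixing measure in Eq.~\ref{eq:ppd} should be $p(\phi\mid x_{\mathrm{obs}},D)$ rather than $p(\phi\mid D)$ is right. Treating $1/|S|$ as an input-dependent positive weight is also more careful than the paper's remark that the two objectives differ only in the numerical weights. Since the ratio of the weights of $\mathcal{L}$ and $\mathbb{E}[\ell^\star_\theta]$ depends only on $(t,S)$, the shared-minimizer claim holds whatever the per-cell minimizers turn out to be.

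The gap is in your categorical step, which the paper does not prove itself but delegates to a citation of GIDD. You write the categorical weight as $\omega_j(t,z_t)$, following the paper's notation $w(z_{t,j})$, and pull it out of $\mathbb{E}[\cdot\mid z_t,c]$. The exact GIDD weights, which Theorem~\ref{thm:elbo} needs, are $w_t(z_{t,j},x_j)=g_t(z_{t,j})/q_t(z_{t,j}\mid x_j)$, where $g_t=\beta_t\pi_t'-(\alpha_t'/\alpha_t)\pi_t$ is the forward jump rate into $z_{t,j}$. Only with the factor $1/q_t(z_{t,j}\mid x_j)$ does $w\,[D_{\mathrm{KL}}+D_{\mathrm{IS}}]$ equal the generalized KL between the true and the model reverse rates, and that factor depends on $x_j$ whenever $z_{t,j}\neq\textsc{mask}$.

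Because the forward noise is independent across cells, $p(x_j\mid z_t,c)/q_t(z_{t,j}\mid x_j)\propto p(x_j\mid z_{t,-j},c)$, where $z_{t,-j}$ denotes the states of the other target cells. So the weighted conditional risk is a positive multiple of your Bregman risk averaged over $x_j\sim p(\cdot\mid z_{t,-j},c)$. Your own argument then returns $\hat x_{\theta,j}=p(x_j\mid z_{t,-j},c)$, not $p(x_j\mid z_t,c)$.

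These coincide at \textsc{mask}, where $q_t(\textsc{mask}\mid x_j)=\beta_t\pi_t(\textsc{mask})$ carries no information about $x_j$. They do not coincide at states produced by the uniform component. For example, for a single target cell at an unmasked state the minimizer is $p(x_j\mid c)$, not $p(x_j\mid z_{t,j},c)$. In rate form the same obstruction appears: the model rate $g_t(z)\,q_t(y\mid\hat x)/q_t(z\mid\hat x)$ is a ratio of affine functions of $\hat x$, so ``affine maps commute with the mean'' does not apply.

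The step therefore cannot be closed as you plan. What it actually proves is the stated categorical minimizer at \textsc{mask} states and $p(x_j\mid z_{t,-j},c)$ elsewhere. The KL bound and the shared-minimizer claim are unaffected. Plugging in $p(x_j\mid z_{t,-j},c)$ reproduces the true reverse rates, so the $N\to\infty$ conclusion of Corollary~\ref{cor:min} survives with this corrected denoiser.
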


\begin{corollary}[The posterior predictive minimizes the objective]
\label{cor:min}
Under the assumptions of Corollary~\ref{cor:ppd}, the denoiser of the posterior predictive of Eq.~\ref{eq:ppd}, i.e., $\hat{x}^\star_j = \mathbb{E}[x_j \mid z_t,c\,]$ for numerical and $p(x_j \mid z_t, c\,)$ for categorical cells, minimizes $\mathcal{L}$ and $\mathbb{E}[\ell^\star_\theta]$, and with it the model of Eq.~\ref{eq:model} equals the posterior predictive as $N \to \infty$. The proof is in Appendix~\ref{app:proofs}.
\end{corollary}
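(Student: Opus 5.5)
The argument has two parts, mirroring the two claims of Corollary~\ref{cor:min}: first that the posterior-predictive denoiser $\hat{x}^\star$ minimizes both $\mathcal{L}$ and $\mathbb{E}[\ell^\star_\theta]$, and second that with this denoiser the composed model of Eq.~\ref{eq:model} converges to the posterior predictive of Eq.~\ref{eq:ppd}.

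\textbf{Step 1: pointwise minimization.} Condition on $(S, t, z_t, c)$. Because $S$ is drawn independently of the cell values, under the prior the conditional law of $x_S$ given $(z_t, c)$ is the posterior $p(x_S \mid z_t, c)$ induced by Eq.~\ref{eq:ppd} and the forward kernel Eq.~\ref{eq:forward}. Both $\mathcal{L}$ and $\ell^\star_\theta$ are sums of per-cell terms, and each term depends on the network only through $\hat{x}_{\theta,j}(z_t,c;t)$. The weights $\lambda(\sigma(t))$, $w_{\mathrm{lik}}(t)$, $1/|S|$ and $w(z_{t,j})$ are nonnegative and independent of $\theta$. Hence it suffices to minimize each conditional expected term separately, for each $(S, t, z_t, c)$:
\begin{itemize}[leftmargin=*]
\item \emph{Numerical cells.} The term is a weighted square loss, so by the usual bias--variance identity its minimizer is $\mathbb{E}[x_j \mid z_t, c]$.
\item \emph{Categorical cells.} I would use the explicit GIDD form from Appendix~\ref{app:kernels-gidd}. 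There $D_{\mathrm{KL},j}+D_{\mathrm{IS},j}$ is, for fixed $z_{t,j}$, a Bregman-type divergence between the forward-posterior rate built from $e_{x_j}$ and the one built from $\hat{x}_{\theta,j}$. The mapping from a clean-class distribution to these rates is affine, so the divergence is convex in $\hat{x}_{\theta,j}$. By the Bregman mean property, its expectation over $x_j \mid (z_t,c)$ is minimized at the conditional mean of $e_{x_j}$, which is the posterior PMF $p(x_j \mid z_t,c)$.
\end{itemize}
This gives the first claim. It also shows that $\mathcal{L}$ and $\mathbb{E}[\ell^\star_\theta]$ share a minimizer, consistent with Corollary~\ref{cor:ppd}.

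\textbf{Step 2: convergence of the model.} Plugging $\hat{x}^\star$ into Eq.~\ref{eq:step} makes each factor $q(z_{s,j}\mid z_{t,j}, \hat{x}^\star_j)$. By linearity of the forward posterior in the clean one-hot (and in the mean, for the Gaussian part to first order in $t-s$), each factor equals the exact marginal reverse kernel $q(z_{s,j}\mid z_t, c)$ of cell $j$. The only discrepancy from the true joint reverse kernel $q(z_s \mid z_t, c)$ is that the cells are updated independently given $z_t$. I would show that this discrepancy is $o(t-s)$ per step:
\begin{itemize}[leftmargin=*]
\item \emph{Categorical cells.} In continuous time, at most one cell jumps per infinitesimal step, so the true joint generator is a sum of single-cell generators whose rates depend on the full state $z_t$, exactly as in Eq.~\ref{eq:step}.
\item \emph{Numerical cells.} The exact reverse SDE/ODE has drift $\nabla \log q_t(z_t\mid c)$, which Tweedie's formula expresses through $\mathbb{E}[x \mid z_t, c]$, and the diffusion part factorizes.
\end{itemize}
Thus the product kernel is a consistent discretization of the exact time-reversal of the forward process. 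As $N\to\infty$ with mesh going to zero, standard convergence of Euler-type schemes for SDEs and of tau-leaping for CTMCs implies that $p_\theta(x_S\mid c)$ converges to the $t=0$ marginal of the reversal started from $q_1$. The terminal law $p(z_{t_N})$ matches $q_1$ by the schedule assumptions: full \textsc{mask} at $\alpha_1=0$, and a large $\sigma(1)$ for the numerical cells, where the Gaussian terminal distribution is only approximately $q_1$. That $t=0$ marginal is the posterior predictive.

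\textbf{Main obstacle.} The hard step is this last one. It requires making the mixed continuous/discrete time-reversal argument rigorous: controlling the $o(t-s)$ coupling error uniformly, and handling the terminal mismatch for the Gaussian part, which I would treat as a limit $\sigma(1)\to\infty$. Absent full rigor, I would state the convergence under standard regularity (bounded scores, finite state spaces) and cite the known consistency results for each kernel.
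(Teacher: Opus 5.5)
\paragraph{Same route as the paper, with one gap in the categorical step.} Your two-step structure is the paper's: pointwise minimization under positive, $\theta$-free weights, then plugging the minimizer into Eq.~\ref{eq:step} and letting $N\to\infty$. Your Step~2 is more careful than the paper's proof, which asserts that each factor \emph{is} the true reverse transition and ignores the Gaussian terminal mismatch.

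The gap is in the categorical minimizer, and the paper's proof, which cites GIDD for it, has the same one. The $D_{\mathrm{KL}}+D_{\mathrm{IS}}$ loss is the ELBO of the plug-in step $q(z_{s,j}\mid z_{t,j},\hat x_j)$ with $q_t(\cdot\mid\hat x_j)=\alpha_t\hat x_j+\beta_t\pi_t$. Despite the notation $w(z_{t,j})$, its weight depends on the clean value. Let $\gamma_t(z)$ be GIDD's source-independent forward jump rate into state $z$. Rewriting the rate-form ELBO gives $w=\gamma_t(z_{t,j})/q_t(z_{t,j}\mid x_j)$. Your Bregman mean step treats this weight as fixed given $z_t$. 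Once you account for it, the factor $1/q_t(z_{t,j}\mid x_j)$ cancels the likelihood of the cell's own noisy state inside $p(x_j\mid z_t,c)$. The conditional risk is then a positive multiple of $\mathbb{E}_{x_j\sim p(x_j\mid z_{t,-j},c)}[D_{\mathrm{KL},j}+D_{\mathrm{IS},j}]$. Its unique minimizer is the leave-one-out conditional $p(x_j\mid z_{t,-j},c)$, which coincides with $p(x_j\mid z_t,c)$ only when $z_{t,j}=\textsc{mask}$.

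\paragraph{The linearity claim in Step~2 fails for the same reason.} The forward posterior $q(z_{s,j}\mid z_{t,j},x_j)$ has $q_t(z_{t,j}\mid x_j)$ in its denominator. Whenever $z_{t,j}$ is an unmasked class this depends on $x_j$, so the posterior is not affine in the one-hot. A direct computation shows that the exact per-cell marginal $\sum_{x_j}p(x_j\mid z_t,c)\,q(z_{s,j}\mid z_{t,j},x_j)$ equals the plug-in formula evaluated at $p(x_j\mid z_{t,-j},c)$, not at $p(x_j\mid z_t,c)$.

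A concrete case: two classes $A,B$, a uniform prior, $\alpha_t=\beta_t=\tfrac12$, $\pi_t$ uniform on $\{A,B\}$, and $z_{t,j}=A$. The posterior PMF is $(\tfrac34,\tfrac14)$. Plugging it in gives $q_t(\cdot\mid\hat x_j)=(\tfrac58,\tfrac38)$ and a reverse rate $\tfrac35\gamma_t(A)$ to $B$. The true rate is $\gamma_t(A)$, since $q_t(A\mid c)=q_t(B\mid c)$.

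\paragraph{Where your description does hold, and how to repair.} Your picture of a Bregman divergence between reverse rates that are affine in $\hat x_j$ is right for the mixture step $\sum_{x}\hat x_j(x)\,q(z_{s,j}\mid z_{t,j},x)$. That step's ELBO is a generalized KL between rates with $x$-free weights, but it is not the loss of Eq.~\ref{eq:loss-terms}. You have two ways to fix the argument:
\begin{enumerate}
\item Adopt the mixture step throughout. Then both of your steps hold with $p(x_j\mid z_t,c)$.
\item Keep the GIDD step and state the categorical minimizer as $p(x_j\mid z_{t,-j},c)$. With it, each plug-in factor equals the exact per-cell marginal reverse kernel at every step size. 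Your Step~2 argument then goes through unchanged: cross-cell coupling error $o(t-s)$, consistency as the mesh goes to zero, and the $\sigma(1)\to\infty$ terminal limit. So the $N\to\infty$ conclusion survives even though the stated categorical minimizer does not.
\end{enumerate}
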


Corollary~\ref{cor:ppd} bounds the gap to the posterior predictive by the objective, and Corollary~\ref{cor:min} shows that the posterior predictive attains it, i.e., it is the minimizer and not only the target. Together they make the Bayesian reading of in-context learning exact for our
objective, with the cross-entropy identity relaxed to an upper bound through the diffusion ELBO. The condition on $S$
holds by construction, since every target pattern we draw is independent of the cell values, which makes the target the posterior predictive rather than a conditional that also depends 
on which cells are hidden.

\section{\ourmethod{}: A Unified In-Context Diffusion Model}
\label{sec:training-and-inference}

\subsection{Architecture}
\label{sec:method-arch}

The denoiser operates at the resolution of individual cells, so that every cell of the context and the query can be a target, an observation, or missing. Query rows attend to the context only, so they are answered independently and in parallel, as Eq.~\ref{eq:ppd} is defined for one row at a time.

\paragraph{Tokenization.}
Each cell of the context and the query row is one token: a categorical cell by a lookup of its class (or \textsc{mask}), a numerical cell by a linear map of its value. A learned \emph{observation-state} embedding marks the cell as a target (a cell of $x_S$), observed (a cell of $x_{\mathrm{obs}}$ or of $D$), or missing in the context (a placeholder value with a missing flag), so the target set can differ across rows within one forward pass. The time embedding is added to target tokens only, so the context representation does not depend on $t$. 
Consequently, the keys and values of the context tokens are reused across denoising steps (\S\ref{sec:method-inference}) and the multiple corruptions (\S\ref{sec:method-pretraining}).  No token carries a column index or a role such as ``label'': the label is an ordinary column.

\paragraph{Backbone.}
We follow the two-axis attention of in-context tabular transformers~\citep{hollmann2025accurate}, which alternates between the cells of a row and the rows of a column. We differ in the following aspects:
i) Tokens carry no column-specific embedding, neither random per-column embeddings nor positional encodings over columns, so the network is permutation-equivariant in the columns.
ii) Instead of a single label head, a per-cell head, shared by all cells, returns a scalar for each numerical target and class logits for each categorical target.

\subsection{Pretraining: prior, estimator, and stages}
\label{sec:method-pretraining}

\paragraph{Synthetic prior.}
We draw tables from a structural causal model (SCM) prior of TabICL (v2)~\citep{qu2026tabiclv2} with two changes that follow from \S\ref{sec:method-tasks}: a label is an ordinary column, whose cardinality follows the same sampling scheme as the categorical features, and half of the tables have a numerical label. 

\paragraph{Multi-corruption estimator.}
The loss of Eq.~\ref{eq:loss} is a nested expectation: over tables and target sets, and,
inside, over the corruption $z_t \sim q_t(\cdot \mid x_S)$. One corruption per table estimates the inner expectation without bias but with noise. To reduce this noise, for each table we draw one pair $(t, S)$ and $K$ independent corruptions $z_t^{(1)}, \dots, z_t^{(K)} \sim q_t(\cdot \mid x_S)$, and average
their losses,
\begin{equation}
\hat{\ell}_K(x, S, c) = \frac{1}{K} \sum_{k=1}^{K} \frac{1}{|S|}
\big[\mathcal{L}_{\mathrm{num}} + \mathcal{L}_{\mathrm{cat}}\big]\big(z_t^{(k)}\big),
\label{eq:multi}
\end{equation}
which is unbiased for the inner expectation and divides the variance due to the corruption by $K$. The cost is far below $K$ times that of one corruption: since query rows attend only to the context, the $K$ copies are packed along the row axis and share one context encoding, which is most of the cost of a step in Stage~2, so only the query-side compute grows with $K$. Averaging the outputs instead of the losses would change the objective (See Appendix~\ref{app:ensemble}).


\paragraph{Two stages.}

Pretraining proceeds in two warm-started stages, each widening the training distribution and adding a capability. $T$ is the number of rows per table, $F$ is the number of features. 
\begin{enumerate}\setlength{\itemsep}{2pt}\setlength{\topsep}{2pt}
    \item \textbf{Stage~1: joint on short, complete tables.} Each table has $T = 1{,}024$ rows and $F \sim \log\mathcal{U}(1, 100)$ features, and the context fraction is sampled from $\mathcal{U}(0.3, 0.9)$. Tables have no missing values. We train on $32$M tables with a learning rate of $8 \times 10^{-4}$. Multi-corruption is not activated ($K = 1$).
    
    \item \textbf{Stage~2: long and incomplete contexts.} We sample $T \sim \log\mathcal{U}(400, 10{,}240)$ rows and then $F \sim \log\mathcal{U}(4, \min(200, B/T))$ features under a cell budget $B = 250,000$, since the cost of the denoiser grows with $T \times F$. The context fraction is fixed to $0.8$. Context cells are hidden under MCAR, MAR, or MNAR at a per-table rate $\rho \sim \mathcal{U}(0, \rho_{\max})$, where $\rho_{\max}$ increases linearly from $0$ to $0.2$ over the first quarter of the stage. We train on $2.56$M tables with a learning rate of $10^{-4}$ and $K = 4$.    
    
    \end{enumerate}
Neither stage changes the minimizer of Eq.~\ref{eq:loss}: missing context cells enter as inputs, never as targets, so with an incomplete context $D$ stands for its observed cells and their missingness pattern (See Appendix~\ref{app:ctxmiss}). Optimizer and other constants are in Appendix~\ref{app:pretraining}.

\subsection{Inference: one procedure for every task}
\label{sec:method-inference}

\begin{wrapfigure}{r}{0.52\linewidth}
\vspace{-1em}
\small
\hrule height 0.9pt
\vspace{2pt}
\captionof{algorithm}{Unified conditional-query answering}
\vspace{-8.5pt}
\label{alg:inference}
\hrule\vspace{2pt}
\begin{algorithmic}[1]
\Require $c=(x_{\mathrm{obs}},D)$, $S$, grid $t_N>\dots>t_0=0$, $M$
\For{$m = 1,\dots,M$} 
  \State $z^{(m)}_{t_N,j} \gets \begin{cases} \textsc{mask}, & j \in S_{\mathrm{cat}} \\
  \sigma(t_N)\,\varepsilon_j, & j \in S_{\mathrm{num}} \end{cases}$
  \For{$i = N,\dots,1$}
    \State $z^{(m)}_{t_{i-1}} \sim p_\theta(\cdot \mid z^{(m)}_{t_i}, c)$ \Comment{Eq.~\ref{eq:step}}
  \EndFor
\EndFor
\State \Return $\{z^{(m)}_{t_0}\}_{m=1}^{M}$
\end{algorithmic}
\vspace{2pt}
\hrule height 0.9pt
\vspace{-1em}
\end{wrapfigure}

Because every generation task we consider is a conditional query against the same distribution, no task needs a procedure of its own: Algorithm~\ref{alg:inference} answers all of them with the same pretrained network, and a task is an instantiation of which cells form the target set $S$, where the reverse chain starts ($t_{\mathrm{start}}$), how many steps it takes ($N$), and how many trajectories it draws ($M$). Generation observes nothing and targets the whole row; imputation observes part of the row and target the rest; class-conditional generation observes the label and targets the rest. Per-task defaults are in Appendix~\ref{app:inference}.

Two properties make every instantiation valid and cheap (See Appendix~\ref{app:inference}).
i) \emph{The chain may start at any $t_\mathrm{start} \le 1$}: since $S$ and $t$ are independent in pretraining, every target set was seen at every noise level, and \textsc{mask} lies in the support of
every categorical marginal. 
ii) \emph{The context is encoded once}: query rows attend to the context but not the reverse, and the $t$ enters only the target cells, so the context keys and values are reused across steps, query rows, and trajectories.

\section{Evaluation}
\label{sec:experiments}
\vspace{-0.2em}

We evaluate \ourmethod{} through the following research questions:
\textbf{(RQ1. Generative performance)}
How does a single pretrained model without per-table training or tuning 
compare with models trained separately for each table on the three tasks?
\textbf{(RQ2. Posterior predictive fidelity)} 
How closely does the learned joint distribution 
match the posterior predictive distribution in Eq.~\ref{eq:ppd}, 
and how does this compare with the product of exact per-cell posterior predictives?
\textbf{(RQ3. Target-set performance)}
How does \ourmethod{} compare with per-dataset and in-context baselines 
as the target set varies in missing rate, mechanism, and pattern?
\textbf{(RQ4. Staged pretraining)}
How does the second pretraining stage 
affect performance on long and incomplete tables?

\begin{figure}[t]
    \centering
    \includegraphics[width=0.9\linewidth]{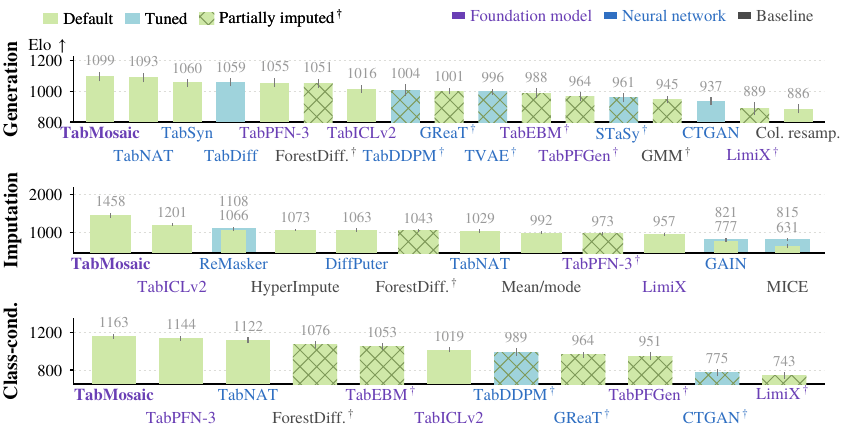}
    \caption{Elo with $95\%$ bootstrap intervals for generation and imputation on the $48$ tables and class-conditional generation on the $36$ classification tables. Bar color marks default and tuned entries, name color the model family, and hatching ($\dagger$) a method that did not complete every table, whose missing tables are scored as described in Appendix~\ref{app:protocol}. Full results are in Appendix~\ref{app:results}.
    }
    \label{fig:main}
\vspace{-1em}
\end{figure} 

\vspace{-0.2em}

\subsection{Experimental setup}
\label{sec:exp-setup}

\topic{Datasets.}
All real-data experiments use 48 tables from TabArena~\citep{erickson2026tabarena} with the
official splits; every stochastic method runs with three seeds, and methods are ranked by the official Elo procedure (Appendix~\ref{app:protocol}).

\topic{Metrics.} \emph{Generation} synthesizes as many rows as the training split and is scored by utility (train on synthetic, test on real, relative to training on real), $\alpha$-precision, $\beta$-recall, and authenticity~\citep{alaa2022faithful}. \emph{Class-conditional generation} conditions on the training split's own label column and is scored by utility, $\alpha$-precision and $\beta$-recall averaged over classes, and label consistency, i.e., the fraction of rows that a classifier trained on real data assigns to the requested class. \emph{Imputation} hides $30\%$ of the feature cells that are not already missing, completely at random, and is scored by min--max-scaled RMSE and error rate on the hidden cells and by the utility of the imputed table.

\topic{Baselines.}
Per-table models are tuned over ten configurations where their paper tunes
and run with their published configuration otherwise. For generation we compare with CTGAN,
TVAE, TabDDPM, STaSy, TabSyn, TabDiff, TabNAT, ForestDiffusion, and GReaT; for imputation
with a mean/mode imputer, MICE, GAIN, HyperImpute, ReMasker, DiffPuter, TabNAT, and
ForestDiffusion; and for class-conditional generation with TabDDPM, CTGAN, GReaT,
ForestDiffusion, TabNAT, and the training-free TabEBM and TabPFGen, each conditioning by its own
mechanism (See Table~\ref{tab:cond}). \ourmethod{} is one pretrained model used as is for every task and table.
In every task we also enter the in-context predictors TabPFN-3, TabICLv2, and LimiX with their official generation and imputation procedures (Appendix~\ref{app:posthoc}).

\subsection{RQ1. Generative performance}
\label{sec:exp-main}

\begin{figure}[t]
\centering
\includegraphics[width=0.9\linewidth]{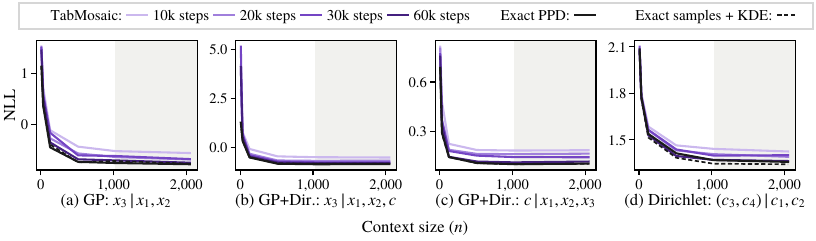}
\vspace{-1em}
\caption{NLL of held-out cells against the context size $n$, for a small model pretrained with
Eq.~\ref{eq:loss} on priors with a closed-form posterior predictive ($x$: numerical, $c$:
categorical cell; Appendix~\ref{app:ppd}). Solid black: exact posterior predictive; dashed:
$M$ exact samples under the same KDE as the model, i.e., the floor for a sampler. Shaded:
beyond the context sizes seen in pretraining.}
\label{fig:ppd}
\vspace{-1.5em}
\end{figure}

\begin{wrapfigure}{r}{0.43\linewidth}
    \centering
    \includegraphics[width=0.98\linewidth]{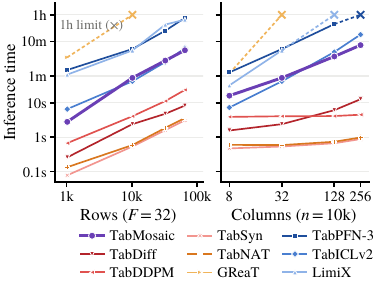}
    \vspace{-1em}
    \caption{Time to generate one synthetic table against the number of rows and of columns (median of three runs). $\times$: over the one-hour limit. Reds: per-table diffusion; oranges: per-table autoregressive; blues: in-context predictors sampled column by column.}
    \label{fig:latency}
    \vspace{-2em}
\end{wrapfigure}

\paragraph{Generation.}
\ourmethod{} ranks first by Elo ($1099$), with the best mean rank in utility ($4.1$, against $4.6$ for TabNAT), i.e., the rows it generates train the best predictors, whereas the per-table models, which are trained on the table, fit its support more tightly.
\vspace{-9pt}

\paragraph{Imputation.}
The margin is widest here: an Elo of $1458$, against $1201$ for TabICLv2 and $1108$ for tuned
ReMasker, and a mean error of $0.129$, against $0.146$ and $0.157$. The gap is largest on
categorical cells ($0.314$, against $0.375$ for TabICLv2), where every per-table imputer except TabNAT and ForestDiffusion stays above the mode imputer (0.443)
\vspace{-9pt}

\paragraph{Class-conditional generation.}
\label{sec:exp-cond}
The same model answers this task by observing the label cell. It ranks first by Elo ($1163$),
ahead of TabPFN-3 ($1144$) and TabNAT ($1122$), and its rows are about as separable by class as
real rows (label consistency $0.983$ of the real one), whereas TabPFGen over-separates them
($1.093$).
\vspace{-9pt}

\paragraph{Inference time.} Fig.~\ref{fig:latency} reports the time to generate one synthetic table as the rows and the columns grow. \ourmethod{} takes $53$~s at $10{,}240$ rows and $32$ columns, against $6$--$7$~min for TabPFN-3 and LimiX. The gap widens with the columns, since the in-context predictors sample one column at a time and re-encode the context for each, whereas \ourmethod{} denoises all target cells jointly. At $256$ columns, \ourmethod{} takes $8$~min, TabICLv2 $16$~min, and TabPFN-3 and LimiX exceed one hour. The per-table models sample in seconds once trained, but pay the training cost on every table. GReaT exceeds the limit from $10{,}240$ rows.

\subsection{RQ2. Posterior predictive fidelity}
\label{sec:exp-ppd}

\begin{wrapfigure}{r}{0.4\linewidth}
    \vspace{-1em}
    \centering
    \includegraphics[width=0.8\linewidth]{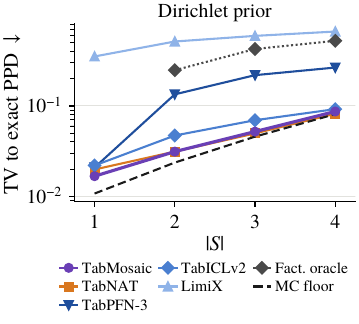}
    \vspace{-5pt}
    \caption{Distance to the exact posterior predictive as the target set $S$ grows on the Dirichlet prior. (Appendix~\ref{app:ppd}).}
    \label{fig:ppd-real}
    \vspace{-1.5em}
\end{wrapfigure}

Corollary~\ref{cor:ppd} states what the objective targets. Following \citet{muller2022transformers}, we test it on priors whose posterior predictive is available in closed form: a Gaussian Process (GP), a GP with a Dirichlet-distributed categorical column, and a Dirichlet prior over the joint of four
categorical columns (See Appendix~\ref{app:ppd}). Every panel carries the Monte-Carlo floor, i.e., the statistic between two sets of $M$ exact samples, and Fig.~\ref{fig:ppd-real} adds a factorized oracle, i.e., the product of the exact per-cell posteriors, which bounds any method that fills the cells independently.

\topic{The objective reaches the posterior predictive.}
We pretrain a small model with our objective Eq.~\ref{eq:loss} on a mixture of these priors and evaluate it. As shown in the Fig.~\ref{fig:ppd}, the NLL approaches that of the exact posterior predictive on every prior, e.g., on the GP at $n=1,024$, it moves from $-0.53$ to $-0.71$ against $-0.77$ for the exact one, and on the categorical cells of the GP+Dirichlet prior it reaches $0.114$ against $0.103$. The same holds beyond the context sizes seen in training ($n=2048$).

\topic{The joint over a target set.}
We then apply the pretrained \ourmethod{}, which has never seen this prior, to hidden sets $|S| \in \{1, \dots, 4\}$, i.e., from imputation to generation (Fig.~\ref{fig:ppd-real}). The factorized oracle degrades as $|S|$ grows, i.e., the dependence among the hidden cells matters. \ourmethod{} stays within $1.0$--$1.3$ times the floor for $|S| \ge 2$ without training, on par with TabNAT, which is trained on each context table. TabICLv2 is within $1.1$--$2.0$, and TabPFN-3 and LimiX are farther ($3.3$--$5.7$ and $8$--$22$, against $6.5$--$10.5$ for the oracle). \ourmethod{} obtains this joint in one pass without an ordering, whereas the in-context predictors assemble it column by column (Fig.~\ref{fig:latency}).
\vspace{-0.4em}

\subsection{RQ3. Target-set performance}
\label{sec:exp-targets}

\begin{figure}[t]
\centering
\begin{minipage}[t]{0.46\linewidth}
    \vspace{7pt}
    \footnotesize
    \centering
    \captionof{table}{Error on hidden cells relative to a mean/mode imputer (below $1$ means better than mean/mode), for the four best of nine methods on the $24$ tables that all nine methods completed. All methods in Table~\ref{tab:targets-full}.}
    \label{tab:targets}
    \setlength{\tabcolsep}{1pt}
    \begin{tabular}{@{}llcccc@{}}
    \toprule
     & & TabMosaic & TabICLv2 & TabNAT & DiffPuter \\
    \midrule
    \multirow{4}{*}{\rotatebox{90}{\scriptsize MCAR rate}} & $10\%$ & \textbf{0.622} & 0.729 & 0.777 & 0.901 \\
     & $30\%$ & \textbf{0.675} & 0.812 & 0.860 & 0.966 \\
     & $50\%$ & \textbf{0.770} & 0.921 & 0.932 & 0.955 \\
     & $70\%$ & \textbf{0.843} & 0.974 & 0.993 & 1.005 \\
    \midrule
    \multirow{2}{*}{\rotatebox{90}{\scriptsize $30\%$}} & MAR & \textbf{0.628} & 0.807 & 0.887 & 0.887 \\
     & MNAR & \textbf{0.704} & 0.824 & 0.871 & 0.866 \\
    \midrule
    \multirow{3}{*}{\rotatebox{90}{\scriptsize columns}} & $|S|{=}1$ & \textbf{0.523} & 0.575 & 0.780 & 0.826 \\
     & $|S|{=}2$ & \textbf{0.600} & 0.776 & 0.895 & 0.822 \\
     & $|S|{=}4$ & \textbf{0.767} & 0.888 & 0.960 & 0.912 \\
    \bottomrule
    \end{tabular}
\end{minipage}\hfill
\begin{minipage}[t]{0.52\linewidth}
    \vspace{0pt}
    \centering
    \includegraphics[width=0.85\linewidth]{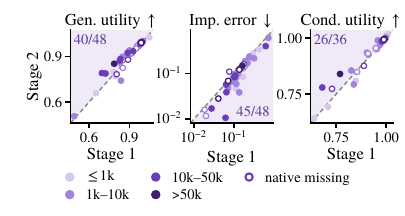}\\[-7pt]
    \includegraphics[width=0.85\linewidth]{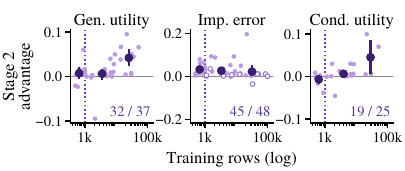}
    \vspace{-0.5em}
    \caption{Stage 1 vs.\ Stage 2: per-table scores (top) and the Stage 2 advantage against table size (bottom). Native missing cells are pre-filled for Stage 1.}
\label{fig:stage}
\end{minipage}
\vspace{-1.5em}
\end{figure}

Algorithm~\ref{alg:inference} takes any target set without adaptation.
We vary the target set along three axes on real tables, with the training split as context and
the test rows as queries (the out-of-sample setting of
\citealt{zhang2025tabnat}): the \emph{missing rate} $\{10, 30, 50, 70\%\}$, the \emph{mechanism} that selects the hidden cells
(MCAR, MAR, MNAR); and the \emph{pattern}, random cells against $|S| \in \{1, 2, 4\}$ whole columns
hidden in every query row. 
Table~\ref{tab:targets} reports the error on the hidden cells relative to a mean/mode imputer. \ourmethod{} has the lowest error in every setting. Its error grows with the missing rate ($0.62$ to $0.84$), where the baselines fall back to the imputer ($\approx 1$), and is flat across the mechanisms. 
\vspace{-0.4em}

\subsection{RQ4. Staged pretraining}
We compare the model after Stage~1 with the final one on the same tables and tasks to isolate what Stage~2 adds. Fig.~\ref{fig:stage} shows that Stage~2 lowers the imputation error on $45$ of the $48$ tables, from a mean of $0.156$ to $0.129$, including $10$ of the $11$ tables whose context has missing values, which Stage~1 reads only after a mean/mode pre-fill. Generation and class-conditional utility improve on $40$ of $48$ and $26$ of $36$ tables, with the gain concentrated on tables above $10$k rows, i.e., the lengths that Stage~1 never saw.
\vspace{-0.4em}

\section{Conclusion}
\label{sec:conclusion}
\vspace{-0.4em}
We posed tabular generation, imputation, and class-conditional generation as one problem, learning the joint conditional distribution over an arbitrary set of target cells on an unseen table, and showed that a diffusion objective over the target set is the natural fit: it defines a valid joint for every mixed-type target set without an ordering, upper-bounds the conditional NLL, and, under the pretraining prior, is minimized by the posterior predictive distribution. \ourmethod{}, a single model pretrained with this objective in two stages, ranks first on all three tasks across $48$ datasets against $20$ baselines, approaches the exact posterior predictive on synthetic priors, and serves arbitrary target sets as is.


\clearpage
\subsection*{AI use statement}
We used generative AI tools to discuss the experimental design, to write and run the code for experiments and figures, to check the proofs in Appendix~\ref{app:proofs}, and to draft and edit parts of the text. The research questions, the method, and all decisions originated with the authors, who reviewed every AI-assisted output and take full responsibility for the content.
{
    \bibliography{iclr2027_conference}
    \bibliographystyle{iclr2027_conference}
}

\clearpage
\appendix
\addtocontents{toc}{\protect\setcounter{tocdepth}{2}}
\renewcommand{\contentsname}{Table of Contents}
\section*{Appendix}
\tableofcontents
\clearpage



\section{Proofs}
\label{app:proofs}

Throughout the appendix, we write $c = (x_{\mathrm{obs}}, D)$ for the conditioning information,
$S = S_{\mathrm{num}} \cup S_{\mathrm{cat}}$ for the target set, and
$\mathcal{Z} = \mathbb{R}^{|S_{\mathrm{num}}|} \times \prod_{j \in S_{\mathrm{cat}}}
\mathcal{V}_j$ for the mixed state space, where $\mathcal{V}_j$ is the class set of column $j$
augmented with the \textsc{mask} token.

\subsection{Proof of Proposition~\ref{prop:valid} (validity)}
\label{app:proof-valid}

\begin{proof}
Each factor of the reverse step in Eq.~\ref{eq:step} is a normalized Gaussian density or a
normalized categorical PMF in its own cell variable, for \emph{any} value of the conditioning
arguments $(z_t, c, t)$ and any parameter $\theta$; hence Eq.~\ref{eq:step} is a product
probability measure on $\mathcal{Z}$ with respect to the product of the Lebesgue and counting
measures, i.e., a Markov kernel. The terminal law $p(z_{t_N})$ is a fixed normalized product law,
i.e., $\mathcal{N}(0, \sigma_{\max}^2)$ per numerical cell and the \textsc{mask} point mass per
categorical cell. A finite composition of Markov kernels applied to a normalized initial law
defines a normalized joint law of $(z_{t_N}, \dots, z_{t_0})$, and marginalizing the intermediate
states leaves a normalized law of $z_{t_0} = x_S$, which is Eq.~\ref{eq:model}.
\end{proof}

We emphasize the two distinct roles of the product structure. The \emph{loss} decomposes
additively over cells because the two \emph{step} distributions, Eq.~\ref{eq:forward}
and Eq.~\ref{eq:step}, factorize given $z_t$; the \emph{model}, Eq.~\ref{eq:model}, does not
factorize across cells or modalities at the endpoint, because the factors of consecutive steps
are coupled through the full mixed state.

\subsection{Proof of Theorem~\ref{thm:elbo} (conditional ELBO)}
\label{app:proof-elbo}

\begin{proof}
Fix $(x, S, c)$ and the time grid $1 = t_N > \dots > t_0 = 0$. Writing $q$ for the forward law of
$(z_{t_1}, \dots, z_{t_N})$ given $x_S$ and applying Jensen's inequality to
Eq.~\ref{eq:model} --- the only inequality in the proof ---
\begin{equation}
-\log p_\theta(x_S \mid c)
\;\le\; \mathbb{E}_q\!\left[ -\log
\frac{p(z_{t_N}) \prod_{i=N}^{1} p_\theta(z_{t_{i-1}} \mid z_{t_i}, c)}
{q(z_{t_1}, \dots, z_{t_N} \mid x_S)} \right],
\end{equation}
which rearranges into the standard decomposition
\begin{equation}
\sum_{i=2}^{N} \mathbb{E}\,
\mathrm{KL}\big( q(z_{t_{i-1}} \mid z_{t_i}, x_S) \,\big\|\, p_\theta(z_{t_{i-1}} \mid z_{t_i}, c) \big)
\;+\; \mathbb{E}\,\mathrm{KL}\big( q(z_{t_N} \mid x_S) \,\big\|\, p(z_{t_N}) \big)
\;+\; \mathbb{E}\big[ -\log p_\theta(x_S \mid z_{t_1}, c) \big].
\end{equation}
The prior term is $\theta$-free and vanishes as $\sigma_{\max}$ grows (numerical) and as the
terminal mixing completes (categorical); together with the discretization residue of the
reconstruction term, it forms the constant $C$.

For each step term, \emph{both} arguments of the KL factorize over cells: the forward posterior
by the cell-wise independence of Eq.~\ref{eq:forward}, and the model by the definition of the
step, Eq.~\ref{eq:step}. The KL divergence between two product measures is the sum of the
per-factor KL divergences, so every step term is a sum of per-cell terms, which we group by
modality.

For a numerical cell $j$, both factors are Gaussians with the same variance, differing only in
the location parameter, which is a linear function of the clean value; the per-step KL is
therefore a quadratic $\kappa_i\, (\hat{x}_{\theta,j}(z_{t_i}, c; t_i) - x_j)^2$ with a
$\theta$-free coefficient $\kappa_i$ determined by the schedule. Summing over the steps and
passing to the continuous-time limit yields
$\mathbb{E}_t\, w_{\mathrm{lik}}(t)\, (\hat{x}_{\theta,j} - x_j)^2$ with the likelihood weight
$w_{\mathrm{lik}}(t) = \sigma'(t) / \sigma(t)^3$; with this weight, the numerical part of the
bound is exact up to discretization. Replacing $w_{\mathrm{lik}}$ by the EDM weight
$\lambda(\sigma(t))$ (Appendix~\ref{app:kernels}) multiplies each $(t, z_t)$-indexed term by a
positive factor. Since the minimization over $\hat{x}_{\theta,j}(z_t, c; t)$ is pointwise in
$(t, z_t)$, every positive weighting has the same minimizer
$\hat{x}_{\theta,j} = \mathbb{E}[x_j \mid z_t, c]$; the reweighting changes the optimization
geometry, not the target.

For a categorical cell $j$, the per-step KL divergences under the GIDD kernel telescope, in the
continuous-time limit, into the exact weighted form
$\mathbb{E}_t\, w(z_{t,j})\, [D_{\mathrm{KL},j} + D_{\mathrm{IS},j}]$ of
\citet{vonrutte2025generalized}, where the weights and both divergence terms are available in
closed form (Appendix~\ref{app:kernels}). Collecting the numerical cells into
$\mathcal{L}_{\mathrm{num}}$ and the categorical cells into $\mathcal{L}_{\mathrm{cat}}$ gives
the statement.
\end{proof}

\subsection{Proof of Corollary~\ref{cor:ppd} (the objective targets the posterior predictive distribution)}
\label{app:proof-ppd}

\begin{proof}
Let tables be drawn from the prior, $(x \cup D) \sim p(\mathcal{D})$, with the target pattern
drawn independently of the data, $S \perp (x, D)$. Taking expectations of the pointwise bound of
Theorem~\ref{thm:elbo},
\begin{equation}
\mathbb{E}\big[ \ell^\star_\theta \big] + C
\;\ge\; \mathbb{E}\big[ -\log p_\theta(x_S \mid x_{\mathrm{obs}}, D) \big].
\end{equation}
Conditionally on $(S, x_{\mathrm{obs}}, D)$, the clean target $x_S$ is distributed as the
posterior predictive $p(x_S \mid x_{\mathrm{obs}}, D)$ of Eq.~\ref{eq:ppd}, because the query row
and the context rows are i.i.d.\ given the mechanism $\phi$. The cross-entropy decomposition
then gives
\begin{equation}
\mathbb{E}\big[ -\log p_\theta(x_S \mid x_{\mathrm{obs}}, D) \big]
= \mathbb{E}\big[ H\big( p(\cdot \mid x_{\mathrm{obs}}, D) \big) \big]
+ \mathbb{E}\big[ \mathrm{KL}\big( p(\cdot \mid x_{\mathrm{obs}}, D) \,\big\|\,
p_\theta(\cdot \mid x_{\mathrm{obs}}, D) \big) \big],
\end{equation}
where the entropy term is independent of $\theta$; the constant $C'$ of the statement collects
$C$ and this entropy. Minimizing the loss therefore minimizes an upper bound on the expected KL
divergence to the posterior predictive. That $\mathcal{L}$ and $\mathbb{E}[\ell^\star_\theta]$
share their minimizer follows from the proof of Theorem~\ref{thm:elbo}: the two differ only in
the positive per-$(t, z_t)$ weights of the numerical terms. When $S$ is a single cell and the
model class is an explicit classifier, the diffusion ELBO is replaced by the exact
log-likelihood and the bound collapses to the cross-entropy identity of
\S\ref{sec:related-icl}; the corollary is the same statement with the single cell generalized to
an arbitrary mixed-type cell set and the equality relaxed by the ELBO.
\end{proof}

\subsection{Proof of Corollary~\ref{cor:min} (The posterior predictive minimizes the objective)}
\label{app:proof-min}

\begin{proof}[Proof.]
Under the assumptions, $(x_S, z_t)$ in Eq.~\ref{eq:loss} is distributed as $p(x_S \mid c)\,q_t(z_t \mid x_S)$. Conditioned on $(z_t, c)$, the numerical term is a squared loss, minimized by the posterior mean, and the categorical term is the GIDD ELBO term, which is minimized by the true posterior PMF~\citep{vonrutte2025generalized}. Both minimizers are pointwise in $(z_t, c, t)$ and independent of the positive weights, hence they minimize $\mathcal{L}$ and $\mathbb{E}[\ell^\star_\theta]$ alike. With this denoiser every factor of Eq.~\ref{eq:step} is the true reverse transition of the forward process started from $p(x_S \mid c)$, and as $N \to \infty$ the per-step KL terms of the ELBO vanish~\citep{song2021score, vonrutte2025generalized}, so the composition has marginal $p(x_S \mid c)$ at $t = 0$.
\end{proof}

\section{Kernel details: EDM and GIDD}
\label{app:kernels}

\paragraph{Shared time.}
One time $t \sim \mathcal{U}[\epsilon, 1 - \epsilon]$ with $\epsilon = 10^{-3}$ is drawn per table and shared by the numerical and the categorical cells of its target set. The numerical noise level follows the EDM schedule $\sigma(t) = \big(\sigma_{\min}^{1/\rho} + t\,(\sigma_{\max}^{1/\rho} - \sigma_{\min}^{1/\rho})\big)^{\rho}$ with $\sigma_{\min} = 0.002$, $\sigma_{\max} = 80$, and $\rho = 7$~\citep{karras2022elucidating}; the categorical mixing coefficients are those of the GIDD schedule below at the same $t$.

\paragraph{Numerical cells (EDM).}
Numerical cells follow \citet{karras2022elucidating}: the forward kernel is $q_\sigma(z \mid x) = \mathcal{N}(z; x, \sigma^2)$ and the denoiser uses the EDM preconditioning with data scale $\sigma_d = 1$, since every numerical column is mapped to a standard normal marginal by a per-column quantile transform fit on the observed context cells (Appendix~\ref{app:pretraining}). The training weight is the EDM weight $\lambda(\sigma) = (\sigma^2 + \sigma_d^2) / (\sigma \sigma_d)^2$, which equalizes the gradient magnitude across noise levels and, as shown in Appendix~\ref{app:proof-elbo}, shares its minimizer $\mathbb{E}[x \mid z, c]$ with the likelihood weight at every $(z, \sigma)$.

\paragraph{Categorical cells (GIDD).}
\label{app:kernels-gidd}
Categorical cells follow the interpolating family of \citet{vonrutte2025generalized}, whose forward marginal is Eq.~\ref{eq:forward} with a mixing distribution $\pi_t$ that blends the \textsc{mask} point mass with uniform noise; we use their mixing rate $p_u = 0.1$ and $\gamma = 1$, under which the uniform component vanishes at both ends, so that $\alpha_0 = 1$ (clean) and the terminal law is exactly the \textsc{mask} point mass. The per-cell loss carries the exact continuous-time ELBO weights of their Theorem~3.7, $\mathbb{E}_t\, w(z_{t,j})\, [D_{\mathrm{KL},j} + D_{\mathrm{IS},j}]$, where $D_{\mathrm{KL},j}$ compares the true and the model one-step posteriors, $D_{\mathrm{IS},j}$ is the importance-sampled correction for non-\textsc{mask} intermediate states, and the weight $w$ is available in closed form. Three properties matter for our construction: i) the weights are exact, so the categorical block enters Theorem~\ref{thm:elbo} without any reweighting caveat; ii) unlike masked (absorbing) diffusion~\citep{austin2021structured, sahoo2024simple}, where an unmasked token can never change again, the uniform component lets the reverse process revise an already-decoded token; and iii) \textsc{mask} is a legal state at every $t$ and carries no information about the clean value, which is what lets a categorical target cell enter the reverse process at any $t$ (Appendix~\ref{app:inference}).

\section{Multi-corruption estimator of the objective}
\label{app:ensemble}

Theorem~\ref{thm:elbo} and Corollary~\ref{cor:ppd} concern the population objective
\begin{equation}
\mathcal{J}(\theta) = \mathbb{E}_{(x \cup D) \sim p(\mathcal{D})}\;
\mathbb{E}_{S}\; \mathbb{E}_{t}\;
\mathbb{E}_{z_t \sim q_t(\cdot \mid x_S)}
\big[ \ell_\theta(z_t, t; x_S, c) \big],
\qquad
\ell_\theta = \tfrac{1}{|S|}\big[\mathcal{L}_{\mathrm{num}} + \mathcal{L}_{\mathrm{cat}}\big],
\label{eq:population}
\end{equation}
which is $\mathcal{L}$ of Eq.~\ref{eq:loss}. Training replaces the inner expectations by
samples. Per optimization step, we draw $G$ tables; for table $g$, we draw \emph{one} time $t_g$
and \emph{one} target pattern $S_g$, and then $K$ independent corruption realizations of the
same $(t_g, S_g)$:
\begin{equation}
z^{(g,k)} \overset{\mathrm{iid}}{\sim} q_{t_g}\big(\cdot \mid x_{S_g}\big),
\qquad
\widehat{\mathcal{J}}_{G,K}(\theta)
= \frac{1}{G} \sum_{g=1}^{G} \frac{1}{K} \sum_{k=1}^{K}
\ell_\theta\big(z^{(g,k)}, t_g;\; x_{S_g}, c_g\big),
\label{eq:estimator}
\end{equation}
whose inner average is $\hat{\ell}_K$ of Eq.~\ref{eq:multi}. $K = 1$ is the standard
single-corruption estimator; $K > 1$ averages the \emph{losses} of the $K$ copies, not their
outputs.

\begin{proposition}[Unbiasedness and variance]
\label{prop:estimator}
For every $K \ge 1$, $\mathbb{E}[\widehat{\mathcal{J}}_{G,K}] = \mathcal{J}$ and
$\mathbb{E}[\nabla_\theta \widehat{\mathcal{J}}_{G,K}] = \nabla_\theta \mathcal{J}$, and
\begin{equation}
\mathrm{Var}\big(\nabla_\theta \widehat{\mathcal{J}}_{G,K}\big)
= \frac{1}{G}\left( V_{\mathrm{out}} + \frac{V_{\mathrm{in}}}{K} \right),
\qquad
\begin{aligned}
V_{\mathrm{out}} &= \mathrm{Var}_{(x, D, S, t)}\big( \mathbb{E}_z[\nabla_\theta \ell_\theta] \big),\\
V_{\mathrm{in}} &= \mathbb{E}_{(x, D, S, t)}\big[ \mathrm{Var}_z(\nabla_\theta \ell_\theta) \big].
\end{aligned}
\end{equation}
Hence $K$ reduces exactly the corruption-conditional component $V_{\mathrm{in}}$ by $1/K$, while
$G$ reduces both, and the minimizer is unchanged by $K$.
\end{proposition}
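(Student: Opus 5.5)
The plan has three parts: unbiasedness by the tower property, the variance formula by the law of total variance applied twice, and invariance of the minimizer as a consequence of unbiasedness. Write $\xi_g = (x_g, D_g, S_g, t_g)$ for the outer draw of table $g$, i.e.\ the table, the pattern and the time. Write $z^{(g,k)}$ for its corruptions and $Y_g = \frac{1}{K}\sum_k \ell_\theta(z^{(g,k)}, t_g; x_{S_g}, c_g)$ for its loss. The $\xi_g$ are i.i.d.\ across $g$. Given $\xi_g$, the $z^{(g,k)}$ are i.i.d.\ from $q_{t_g}(\cdot \mid x_{S_g})$, and draws for different tables are independent.

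\emph{Unbiasedness.} By the tower property, $\mathbb{E}[Y_g] = \mathbb{E}_{\xi}\big[\frac1K\sum_k \mathbb{E}_z[\ell_\theta \mid \xi]\big] = \mathbb{E}_\xi \mathbb{E}_z[\ell_\theta] = \mathcal{J}$. Averaging over $g$ preserves this, so $\mathbb{E}[\widehat{\mathcal{J}}_{G,K}] = \mathcal{J}$. For the gradient, the sampling distributions of $\xi$ and $z$ do not depend on $\theta$: $p(\mathcal{D})$, the pattern law and $q_t$ are all fixed. So $\nabla_\theta$ acts only on $\ell_\theta$, and the same computation applies to $\nabla_\theta \ell_\theta$. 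Exchanging $\nabla_\theta$ and $\mathbb{E}$ is the one analytic point. I would justify it by dominated convergence, assuming the denoiser is differentiable in $\theta$ with a locally integrable envelope for $\nabla_\theta \ell_\theta$. This is a standard regularity assumption and I would state it explicitly, since the weights $\lambda(\sigma)$ are bounded on $t \in [\epsilon, 1-\epsilon]$.

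\emph{Variance.} Independence across $g$ gives $\mathrm{Var}(\nabla\widehat{\mathcal{J}}_{G,K}) = \frac1G \mathrm{Var}(\nabla Y_1)$, read componentwise or as a covariance matrix. I would then apply the law of total variance conditioned on $\xi$:
\begin{equation*}
\mathrm{Var}(\nabla Y_1) = \mathrm{Var}_\xi\big(\mathbb{E}[\nabla Y_1 \mid \xi]\big) + \mathbb{E}_\xi\big[\mathrm{Var}(\nabla Y_1 \mid \xi)\big].
\end{equation*}
The first term equals $V_{\mathrm{out}}$, because $\mathbb{E}[\nabla Y_1 \mid \xi] = \mathbb{E}_z[\nabla \ell_\theta \mid \xi]$. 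The conditional i.i.d.\ structure of the $K$ copies gives $\mathrm{Var}(\nabla Y_1 \mid \xi) = \frac1K \mathrm{Var}_z(\nabla\ell_\theta \mid \xi)$, so the second term is $V_{\mathrm{in}}/K$. A key point is that all $K$ copies share the same $(t_g, S_g)$, and this is exactly why $K$ does not touch $V_{\mathrm{out}}$. Finite second moments of $\nabla\ell_\theta$ are assumed.

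\emph{Minimizer.} Since $\mathbb{E}[\widehat{\mathcal{J}}_{G,K}] = \mathcal{J}$ for every $K$, the population objective, and hence its minimizer, is the same for all $K$. Because $\mathcal{J}$ is $\mathcal{L}$ of Eq.~\ref{eq:loss}, Corollary~\ref{cor:min} identifies this minimizer.

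I would contrast this with averaging the outputs instead of the losses. For squared loss, $(\frac1K\sum_k \hat x^{(k)} - x)^2 \ne \frac1K \sum_k (\hat x^{(k)} - x)^2$, so output averaging would change the pointwise minimizer. The main obstacle is only the regularity needed to interchange gradient and expectation; the rest is bookkeeping.
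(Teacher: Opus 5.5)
Your proof is correct and follows essentially the paper's argument: unbiasedness of each summand (tower property, with the gradient--expectation interchange under standard regularity), and the law of total variance conditioned on $(x, D, S, t)$, with the $1/K$ coming from the conditionally i.i.d.\ corruptions and the $1/G$ from independence across tables. You are more explicit than the paper about the dominated-convergence and finite-second-moment assumptions, and about why sharing $(t_g, S_g)$ across the $K$ copies leaves $V_{\mathrm{out}}$ untouched.
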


\begin{proof}
Each summand of Eq.~\ref{eq:estimator} is an unbiased sample of Eq.~\ref{eq:population} by
construction, which gives unbiasedness, and, under standard regularity, the same for the
gradient. The variance follows from the law of total variance conditioned on $(x, D, S, t)$: the
inner mean of $K$ i.i.d.\ draws has conditional variance
$\mathrm{Var}_z(\nabla_\theta \ell_\theta) / K$, and the $G$ tables are independent.
\end{proof}

\section{Missingness in the in-context set}
\label{app:ctxmiss}

The second pretraining stage additionally corrupts the \emph{conditioning} side: some cells of
the context rows are hidden from the model. This changes the conditioning object, not the bound.

\paragraph{Masking distribution.}
Let $R \in \{0,1\}^{|D| \times F}$ be an observation mask over context cells. For each table we
draw a rate $\rho \sim \mathcal{U}(0, \rho_{\max}(s))$, where $s$ is the training step and
$\rho_{\max}(s) = 0.2 \min(1, s / 2{,}500)$; a pattern, i.e., i.i.d.\ cells (probability $0.5$),
a random subset of columns (0.3) or of rows (0.2) hidden at a higher per-cell rate
$q \sim \mathcal{U}(0.5, 1)$ and included with probability $\rho / q$, so that the expected
fraction of hidden cells is $\rho$ under every pattern; and a mechanism, i.e., MCAR
(probability $0.5$), MAR (0.25) or MNAR (0.25). Under MAR, the per-cell rate of a row is tilted
by the within-column rank of one randomly chosen driver column, which stays fully observed;
under MNAR, the rate of a cell is tilted by the rank of its own value (self-masking). The tilt
amplitude is drawn uniformly in $[0, 2\min(\rho, 1 - \rho)]$, so the expected rate remains
$\rho$ and every cell probability stays in $[0, 1]$. The label column is masked like any other
column. The masked context and the conditioning object are
\begin{equation}
\widetilde{D}_{ij} =
\begin{cases}
D_{ij}, & R_{ij} = 1,\\
\emptyset, & R_{ij} = 0,
\end{cases}
\qquad
\widetilde{c} = \big(x_{\mathrm{obs}}, \widetilde{D}, R\big),
\label{eq:maskdist}
\end{equation}
where $\emptyset$ is realized in the network as the \textsc{mask} token (categorical) or the
value $0$ (numerical) \emph{together with} the third observation state in
$\{\text{target}, \text{observed}, \text{missing-in-context}\}$, so that the network reads both
the observed values and the pattern $R$. The per-column quantile transform of the numerical
columns is fit on the observed context cells only, so $\widetilde{c}$ is a function of the
observed part of $D$ and of $R$. Missing cells of the \emph{query} row need no new state: at
inference, they are precisely the cells to be imputed, i.e., they belong to $S$. The training
objective is Eq.~\ref{eq:population} with $c$ replaced by $\widetilde{c}$; we denote it
$\mathcal{J}_s(\theta)$.

\section{Pretraining details}
\label{app:pretraining}

\paragraph{Synthetic prior.}
Tables are drawn from the structural-causal-model prior of TabICL~v2~\citep{qu2026tabiclv2}
with the two changes of \S\ref{sec:method-pretraining}: the label is an ordinary column whose
cardinality follows the same law as the categorical features (up to $200$ classes), and half of
the tables have a numerical label. We use the corrected sampling of categorical cardinalities of
its implementation, graphs with $2$ to $32$ nodes, and its filters that discard graphs and
tables whose label is unpredictable from the features. Every numerical column is mapped to a
standard normal marginal by a per-column quantile transform fit on the observed context cells;
values are mapped back by the inverse transform at inference.

\paragraph{Target pattern.}
For each table, with probability $0.4$ all cells of the query rows are targets (generation);
otherwise a rate $\rho \sim \mathcal{U}(0, 1)$ is drawn and each query cell is a target
independently with probability $\rho$. The pattern is drawn independently of the cell values
and of $t$ (\S\ref{sec:method-objective}).

\paragraph{Table shapes.}
Stage~1 uses $T = 1{,}024$ rows per table, $F \sim \log\mathcal{U}(1, 100)$ features, and a
context fraction drawn from $\mathcal{U}(0.3, 0.9)$; tables have no missing values. Stage~2 draws
the number of rows first, $T \sim \log\mathcal{U}(400, 10{,}240)$, and then the number of
features, $F \sim \log\mathcal{U}(4, \min(200, \lfloor B / T \rfloor))$, with the cell budget
$B = 250{,}000$, so that $T \times F \le B$; the context fraction is $0.8$. Tables in a
micro-batch of four share one shape.

\begin{table}[ht]
\caption{Pretraining configuration. Both stages use $8$ GPUs.}
\label{tab:app-stages}
\begin{center}
\begin{footnotesize}
\begin{tabular}{@{}lll@{}}
\toprule
 & Stage 1 & Stage 2 \\
\midrule
Rows $T$ & $1{,}024$ & $\log\mathcal{U}(400, 10{,}240)$ \\
Features $F$ & $\log\mathcal{U}(1, 100)$ & $\log\mathcal{U}(4, \min(200, B/T))$, $B = 250{,}000$ \\
Context fraction & $\mathcal{U}(0.3, 0.9)$ & $0.8$ \\
Context missingness & --- & $\rho \sim \mathcal{U}(0, \rho_{\max})$, $\rho_{\max}$: $0 \to 0.2$ over $2{,}500$ steps \\
Corruptions $K$ per table & $1$ & $4$ \\
Steps / tables seen & $125$k / $32$M & $10$k / $2.56$M \\
Global batch (tables) & $256$ & $256$ \\
Peak learning rate & $8 \times 10^{-4}$ & $10^{-4}$ \\
Warm-up & $1\%$ & $2\%$ \\
Initialization & random & stage-1 final \\
\bottomrule
\end{tabular}
\end{footnotesize}
\end{center}
\end{table}

\paragraph{Stages.}
Table~\ref{tab:app-stages} summarizes the two-stage recipe. Stage~2 warm-starts from the final
stage-1 checkpoint and introduces the long tables, the context-missingness distribution of
Appendix~\ref{app:ctxmiss}, and the multi-corruption estimator of Appendix~\ref{app:ensemble}
with $K = 4$. The evaluated model is the stage-2 checkpoint at step $10{,}000$; no per-table
training, tuning, or selection is applied anywhere in the paper.

\paragraph{Architecture and optimization.}
The denoiser is the column--row attention transformer of \S\ref{sec:method-arch} with $12$
blocks, width $384$, $8$ heads, a feed-forward expansion of $2$, pre-normalization, and
normalized queries and keys ($29$M parameters). Each cell token combines a type-specific value
embedding (a linear map for numerical cells; a class embedding with a shared vocabulary of
$201$ entries, the last of which is \textsc{mask}, for categorical cells), the three-state
observation embedding, and, for target cells only, the embedding of $t$. Optimization uses Muon
for the two-dimensional weights and AdamW for the remaining parameters at $0.2\times$ the
learning rate, with $\beta_1 = 0.9$, weight decay $0.01$, a cosine schedule with linear warm-up
decaying to $10^{-7}$, gradient clipping at $10$, bf16 precision, and activation checkpointing.

\section{Inference details}
\label{app:inference}

All the tasks run Algorithm~\ref{alg:inference} with the same network; Table~\ref{tab:app-tasks}
lists the settings. The context is the training split of the table, capped at $8{,}192$ rows: a
random subset for generation and a label-stratified random subset for imputation. Cells of the
context that are missing in the table are presented in the missing-in-context state
(Appendix~\ref{app:ctxmiss}); the quantile transform is fit on the observed cells.

\begin{table}[ht]
\caption{Settings per task. $t_{\mathrm{start}}$ is the time at which the reverse process starts,
$N$ the number of steps, and $M$ the number of trajectories per query row; the context is
encoded once per query batch.}
\label{tab:app-tasks}
\begin{center}
\begin{footnotesize}
\setlength{\tabcolsep}{3pt}
\begin{tabular}{@{}llllll@{}}
\toprule
Task & $S$ & $x_{\mathrm{obs}}$ & $t_{\mathrm{start}}$ & Sampler (num.) / $N$ & $M$; output \\
\midrule
Generation & all query cells & --- & $1$ & deterministic / $128$ & $1$; one row per trajectory \\
Class-cond.\ gen. & feature cells & label cell & $1$ & deterministic / $128$ & $1$; one row per trajectory \\
Imputation & missing cells & observed cells & $0.5$ & ancestral / $16$ & $8$; mean (num.), vote (cat.) \\
\bottomrule
\end{tabular}
\end{footnotesize}
\end{center}
\end{table}

\paragraph{Starting the reverse process at $t_{\mathrm{start}} < 1$.}
Since $S \perp t$ during pretraining, the reverse process can be entered at any
$t_{\mathrm{start}} \le 1$. Categorical target cells enter at \textsc{mask}, which lies in the
support of the forward marginal at every $t$ and carries no information about the clean value
(Appendix~\ref{app:kernels-gidd}). Numerical target cells enter at
$z_{t_{\mathrm{start}}} = \hat{x}^{(0)} + \sigma(t_{\mathrm{start}})\, \varepsilon$, where
$\hat{x}^{(0)} = \hat{x}_\theta(\sigma_{\max}\varepsilon'; \sigma_{\max}, c)$ is the output of
one denoiser evaluation at the terminal noise level, i.e., the conditional mean. Imputation uses $t_{\mathrm{start}} = 0.5$ with $N = 16$ ancestral
steps, which we selected on a development checkpoint: per-cell error metrics saturate at a
handful of steps, whereas the distribution-level metrics of generation prefer the full
trajectory, so generation starts at $t_{\mathrm{start}} = 1$ with $N = 128$.

At $N{=}1$ a numerical target cell returns the conditional mean and a categorical
target cell is drawn from the denoiser PMF, each cell independently of the others; this is
property~iii) of \S\ref{sec:method-model}.
The
coupling between the cells of $S$ is created by the iteration: with $N > 1$ every step reads the
current state of every other target cell (Eq.~\ref{eq:step}).

\paragraph{Point estimates from samples.}
Imputation is scored by per-cell error metrics (Appendix~\ref{app:protocol}), which require one
value per cell. We take the mean over the $M = 8$ trajectories for a numerical cell and the
majority vote for a categorical cell (ties are broken by the mode of the observed cells of the
column, then the lowest class index). Distribution-level metrics use the trajectories as they
are, one sample per trajectory.

\paragraph{Generation.}
Generation and class-conditional generation run the full reverse process from the terminal law
over all the target cells jointly; for class-conditional generation the label cell of every
query row is observed and set to the requested class. The denoiser PMF of a categorical cell is
used at temperature $1$: lowering it trades diversity for marginal sharpness and degrades
distribution-level metrics. Each evaluation seed regenerates the entire table, i.e., seeds are
whole-table replicates.

\paragraph{Cost.}
The context is encoded once per query batch and reused across steps, query rows, and trajectories (\S\ref{sec:method-inference}); query rows are processed in blocks of $8{,}192$. The median wall-clock of generation is $25$~s per table on one GPU, whereas the per-table generators train on every table, e.g., TabNAT for a median of $1,696$~s per table over the $48$ tables and three seeds; the scaling with rows and columns is in Fig.~\ref{fig:latency}.

\section{Evaluation protocol}
\label{app:protocol}

\paragraph{Benchmark.}
All the evaluations use the TabArena suite~\citep{erickson2026tabarena} under its single-split (lite) protocol, i.e., the first official outer split of each dataset. Three of the 51 tables have more than 1,000 features; we exclude them from every task and every method, which leaves $48$ tables with $498$ to $100{,}000$ training rows and up to $212$ features. Class-conditional generation uses the $36$ classification tables among them. We report Elo, ranks, and means without statistical significance claims. In the per-size analyses, tables are grouped by training rows into $\le 1$k ($12$ tables), $1$k--$10$k ($21$), $10$k--$50$k ($11$), and $> 50$k ($4$).

\paragraph{Scoring predictors.}
Utility and label consistency are measured with three predictors --- LightGBM, CatBoost, and
TabPFN --- at their default configurations; the score of a table is the mean over the three. We
use three predictors so that no single inductive bias decides the ranking; the predictors are
never tuned on synthetic data.

\paragraph{Generation.}
Every generator is given the official training split as its training data (per-table models) or
as its context (\ourmethod{} and the in-context generators) and synthesizes $n_{\mathrm{train}}$ rows.
Scores are i) \emph{utility}: the official metric of the scoring predictors trained on synthetic
rows and tested on the real test split, divided by the same metric when trained on the real
training rows (the ratio is inverted for the metrics where lower is better, i.e., log loss and RMSE); ii)
$\alpha$-precision and $\beta$-recall~\citep{alaa2022faithful}; and iii)
authenticity~\citep{alaa2022faithful}. Column-level statistics --- Wasserstein-1 on
min-max--scaled numerical columns, total variation on categorical columns, and the mean
absolute difference of pairwise correlations --- are reported as diagnostics
(Appendix~\ref{app:results}). The pool comprises i) per-table generators
\citep{kotelnikov2023tabddpm, kim2023stasy, zhang2024tabsyn, shi2025tabdiff,
zhang2025tabnat, jolicoeur2024generating, borisov2022language}, tuned over $10$ configurations from
their published search spaces where their paper tunes and run with their published configuration
otherwise (TabNAT, ForestDiffusion, and GReaT publish no search space and are run
default-only); ii) two reference baselines, a Gaussian mixture over the row and independent
column resampling, which fix the scale of the Elo; and iii) in-context generators that chain a
pretrained predictor over the columns without training on the table
\citep{grinsztajn2026tabpfn, qu2026tabiclv2, zhang2025limix}, run with their official generation
procedures (Appendix~\ref{app:posthoc}). Every stochastic method is run with $3$ seeds; each seed
regenerates the whole table, scores are computed per seed, and the \emph{scores} are averaged. On classification tables, TabDDPM is trained with the label as its
conditioning input and samples labels from the empirical class proportions, and CTGAN samples
its discrete-column condition vector from the data frequencies; on regression tables both treat
the target as a numerical feature, as in their papers.

\paragraph{Imputation.}
Missing cells are injected into the feature cells of the training split completely at random at
a $30\%$ rate with $3$ mask seeds; the label column is never injected and is passed to every
imputer as an ordinary observed column; natural missingness is passed as missing, excluded from
injection, and never scored. Each imputer fits on the incomplete table and fills it, and only the
injected cells are scored: min-max--scaled RMSE on numerical cells and the misclassification rate
on categorical cells, averaged per table; the primary score of a table is the RMSE when the table
has numerical cells and the misclassification rate otherwise. The pool comprises a column
mean/mode imputer (the reference baseline), MICE \citep{van1999flexible}, deep imputers
\citep{yoon2018gain, jarrett2022hyperimpute, du2024remasker, zhang2025diffputer}, the two
per-table generators whose papers give an imputation procedure \citep{zhang2025tabnat,
jolicoeur2024generating}, and in-context imputers that fill each missing cell from the observed ones
\citep{grinsztajn2026tabpfn, qu2026tabiclv2, zhang2025limix}; methods whose paper tunes are also run
tuned (marked (t)). 
TabNAT does not accept missing inputs, so we train it on the table with its missing cells filled by the mean/mode imputer and then impute them with its official procedure.
\ourmethod{} takes the incomplete table as context, with its missing cells in the
missing-in-context state, and answers every row with one conditional query over its missing
cells (Appendix~\ref{app:inference}). 

\paragraph{Class-conditional generation.}
The condition of a table is its training-split label vector, so the synthetic table has
$n_{\mathrm{train}}$ rows with the real class proportions. 
Scores are utility, class-wise $\alpha$-precision and $\beta$-recall, and label consistency, i.e., the fraction of synthetic rows that the scoring predictors assign to the requested class, relative to held-out real rows.
Label agreement, the fraction of rows whose emitted label equals the requested class, is $1$ for every method except CTGAN, whose condition is a soft constraint; we score CTGAN's output as emitted and do not apply rejection sampling, which would be a procedure outside the model. Table~\ref{tab:cond} lists how each method fixes the condition.

\begin{table}[ht]
\caption{How each method conditions on the class in class-conditional generation.}
\label{tab:cond}
\begin{center}
\begin{small}
\setlength{\tabcolsep}{4pt}
\begin{tabular}{@{}lll@{}}
\toprule
Mechanism & Methods & Where the conditional comes from \\
\midrule
Pretrained conditional & \ourmethod{} & label cell observed; the conditional is a training term \\
Trained conditional, per table & TabNAT & masked training over random orders on the table \\
Class input at training & TabDDPM & the label is an input of the denoiser; not generated \\
One model per class & ForestDiffusion & a separate model is trained for each class \\
Discrete-column condition & CTGAN & condition vector plus a cross-entropy penalty (soft) \\
Prompt & GReaT & the label appears in the prompt; permutation training \\
Classifier energy & TabEBM, TabPFGen & Langevin sampling of a pretrained classifier's energy \\
Chained given the label & in-context generators & label observed; the columns are chained \\
\bottomrule
\end{tabular}
\end{small}
\end{center}
\end{table}

\paragraph{Elo.}
We follow the official TabArena procedure: every (table, metric) pair is one task, a method's score on a task is its rank among the entered methods, and Elo ratings are fit to the pairwise win rates with $200$ bootstrap resamples over tasks for the $95\%$ intervals. Generation tasks are table $\times$ \{utility, $\alpha$-precision, $\beta$-recall, authenticity\}; imputation tasks are table $\times$ \{RMSE on numerical cells, error rate on categorical cells, downstream utility\}, where a table enters only the metrics its cells admit; class-conditional tasks are table $\times$ \{utility, class-macro $\alpha$-precision, class-macro $\beta$-recall, label-consistency gap\}. 
A method that has not been run on a table receives the reference baseline's score on that table (column resampling or the mean/mode imputer), or, in class-conditional generation, a score just below the worst method, and is marked $\dagger$.

\paragraph{Excluded methods.}
TabSyn and TabDiff are not entered for imputation or class-conditional generation, because their papers claim neither task in the form we
evaluate: TabSyn's paper evaluates unconditional generation and imputation of a target column by
inpainting, and TabDiff's imputation experiments impute the target column only and require an
auxiliary model trained on that column. TVAE, STaSy, and the reference baselines have no
conditioning mechanism. CTSyn and TabFORGE~\citep{lin2025ctsyn, jiang2026tabular} have no public
code at the time of writing and are discussed in \S\ref{sec:related-diffusion} only.

\section{Generation and imputation with in-context predictors}
\label{app:posthoc}

The in-context generators and imputers of Appendix~\ref{app:protocol} are pretrained on one cell
at a time: a label cell~\citep{grinsztajn2026tabpfn, qu2026tabiclv2} or masked cells scored
separately given the unmasked ones~\citep{zhang2025limix}. Their official procedures obtain a row
or a set of cells from these single-cell conditionals. i) \emph{Generation} chains the predictor
over the columns, $p(x_1 \mid D)\, p(x_2 \mid x_1, D) \cdots$, in one or several column orders and
averages over the orders; the chain is a valid joint for each order, but each order is a
different joint, and the conditionals were never trained together. LimiX follows the chain with
rounds of masking a fraction of the generated cells and refilling them. ii) \emph{Imputation} fills the missing cells of a row from the observed cells, column by column with the cells already filled entering the conditioning (TabPFN-3, TabICLv2) or all at once (LimiX), and returns one value per cell, i.e., a point estimate at the mode or the mean of each conditional rather than a sample of the joint; Appendix~\ref{app:ppd} measures the distance of such point estimates from the joint. iii) The masked objective
$-\log q_\theta(x_\pi \mid x_{-\pi}, D)$ of LimiX is written over a masked set $\pi$, but the
network outputs one distribution per masked cell given the unmasked cells, so for $|\pi| > 1$
the objective is the pseudo-likelihood-style product $\prod_{j \in \pi} q_\theta(x_j \mid
x_{-\pi}, D)$, which parameterizes no joint over $\pi$. In \ourmethod{}, by contrast, the cells of
$S$ are coupled at every reverse step and the objective is a bound on the likelihood of $x_S$
itself (Theorem~\ref{thm:elbo}).
\section{Distance to the posterior predictive: protocol}
\label{app:ppd}

Corollary~\ref{cor:ppd} and Corollary~\ref{cor:min} are statements about the pretraining prior, so we test them as \citet{muller2022transformers} test theirs, on priors whose posterior predictive is available in closed form. Two experiments use these priors differently. Fig.~\ref{fig:ppd} pretrains a small model on the priors themselves and follows its checkpoints. Fig.~\ref{fig:ppd-real} applies the pretrained \ourmethod{} of \S\ref{sec:method-pretraining}, which has never seen these priors, to the Dirichlet prior, so its distances include a prior-mismatch term that the corollaries do not bound.

\paragraph{Priors.}
$x$ denotes a numerical and $c$ a categorical column. i) \emph{GP}: $x_1, x_2 \sim \mathcal{U}(-2, 2)$ and $x_3 = f(x_1, x_2) + \varepsilon$ with $f \sim \mathcal{GP}(0, k)$, $k$ the RBF kernel with output scale $1$, and $\varepsilon \sim \mathcal{N}(0, 0.1^2)$. The posterior predictive of $x_3$ given the other cells and the context is the GP posterior; the length scale is fixed to $0.6$, so that the posterior predictive is exact. ii) \emph{Dirichlet}: four categorical columns $c_1, \dots, c_4$ with three classes each, one table-level PMF $\theta \sim \mathrm{Dir}(0.3 \cdot \mathbf{1})$ over the $81$ joint cells, and rows drawn i.i.d.\ from $\theta$; the posterior predictive of any subset of cells given the rest is Dirichlet--multinomial. iii) \emph{GP+Dirichlet}: one categorical column $c$ with three classes and class proportions $\pi \sim \mathrm{Dir}(0.3 \cdot \mathbf{1})$, two numerical columns as in i), and $x_3 \sim \mathcal{GP}(0, k)$ with $k = \mathrm{RBF}(x) + \mathbf{1}[c = c']$, so that rows of the same class are correlated. The posterior of $x_3$ given $(x_1, x_2, c)$ is the GP posterior with the extended kernel, and that of $c$ given $(x_1, x_2, x_3)$ follows by enumerating the three classes.

\paragraph{Small model (Fig.~\ref{fig:ppd}).}
The model is a $6$-block, width-$256$ instance of the architecture of Appendix~\ref{app:pretraining} ($6.7$M parameters), trained with the Stage-1 recipe on a mixture of the three priors with equal probability, context length log-uniform in $[8, 1{,}024]$, $16$ query rows, and the target pattern of Appendix~\ref{app:pretraining}, for $60$k steps on one GPU; numerical cells are standardized with the mean and the standard deviation of the observed context cells. We evaluate the checkpoints at $10$k, $20$k, $30$k, and $60$k steps on $20$ tables per prior, $16$ query rows per table, context sizes $n \in \{8, \dots, 2{,}048\}$, and $M = 512$ samples per query. The score is the negative log-likelihood of the true value of the hidden cells under the samples: the empirical frequency of the true cell among the samples for categorical targets and a Gaussian kernel density of the samples for a numerical target. The exact posterior predictive is scored in closed form, and the floor applies the same estimator to $M$ of its own samples.

\paragraph{Target sets (Fig.~\ref{fig:ppd-real}).}
Each query hides a set $S$ of cells of a row of the Dirichlet prior, $|S| \in \{1, \dots, 4\}$, with $n = 1{,}024$ context rows, $20$ tables, $16$ query rows per table, and $M = 1{,}024$ samples per query. The distance to the posterior predictive is the total variation to the exact joint PMF. The figure  carries the \emph{Monte-Carlo floor}, the same statistic between the exact posterior predictive and $M$ of its own samples, and the \emph{factorized oracle}, the product of the exact per-cell marginals, which is the best any method that fills the cells of $S$ independently can reach. The baselines sample the same cells with their own generation procedures, conditioned on the observed cells: TabNAT is trained on the context table and samples the hidden cells autoregressively in a random order; TabPFN-3 samples them column by column from its predictive distribution at temperature $1$, averaging $10$ permutations of the conditioning columns per column; TabICLv2 runs the column-wise chain of its official generation procedure over the hidden columns in the original order at temperature $1$; LimiX runs the autoregressive stage of its official procedure over the hidden columns and then three rounds of masking and refilling within $S$. Their imputation procedures, which return one value per cell, are farther from the posterior predictive than the factorized oracle at every $|S|$ and are omitted from the figure.

\section{Target-set scenarios: protocol}
\label{app:targets}

Unlike the imputation benchmark of Appendix~\ref{app:protocol}, which injects missingness into the training split and fills that split, the scenarios of \S\ref{sec:exp-targets} keep the training split as it is, natural missingness included, and hide cells of the \emph{test} rows, so that the context is fixed and only the target set varies; the hidden values are known and scored. Hidden cells are drawn by the same generator as the benchmark (MCAR at rates $10$--$70\%$, MAR with logistic drivers among the numerical features, MNAR by self-masking; \citealt{zhang2025diffputer}), or by hiding $|S| \in \{1, 2, 4\}$ whole feature columns, chosen at random per table, in every test row. Scores are the per-cell errors of the benchmark, divided by the errors of the mean/mode imputer under the same mask; values below $1$ improve on it. The eight baselines are MICE, GAIN, DiffPuter, ForestDiffusion, and TabNAT, each fit on the training split and applied to the test rows, and the in-context imputers TabPFN-3, TabICLv2, and LimiX with the training split as context. \ourmethod{} runs Algorithm~\ref{alg:inference} unchanged over the whole grid with the training split as context and the imputation settings of Table~\ref{tab:app-tasks}. All results use one mask seed. TabPFN-3, ForestDiffusion, and DiffPuter did not finish the larger tables within our compute budget, so Table~\ref{tab:targets} reports the $24$ tables on which all nine methods completed, and Table~\ref{tab:targets-full} reports every method on those tables together with the number of tables each method has results on.

\section{What the second stage adds}
\label{app:stage}

Stage~2 introduces long tables, the context-missingness distribution, and the multi-corruption estimator (Appendix~\ref{app:pretraining}). We compare the final Stage-1 checkpoint with the final model on the three tasks of Appendix~\ref{app:protocol}, with the same inference settings. The Stage-1 model has no state for a missing context cell, so on the $11$ tables whose context has natural missingness its context is pre-filled (mean/mode for imputation; median and an explicit missing category for generation), whereas the final model reads the incomplete context directly; on these tables the two models therefore see different inputs, which we mark in Fig.~\ref{fig:stage} and exclude from the size analysis of generation and class-conditional generation. Imputation error is lower after Stage~2 on $45$ of $48$ tables (mean $0.129$ against $0.156$) and on every size group. Generation utility improves on $40$ of $48$ tables and class-conditional utility on $26$ of $36$, and both gains concentrate on tables above $10$k rows ($+0.04$ on $9$ of $9$ and $4$ of $4$ tables), i.e., the lengths that Stage~1 never saw.

\section{Additional results}
\label{app:results}

\paragraph{Full results.}
Tables~\ref{tab:full-gen}, \ref{tab:full-imp}, and \ref{tab:full-cond} report every method on generation, imputation, and class-conditional generation over the $48$ and $36$ tables, including the reference baselines and the in-context methods.
\begin{table}[ht]
\caption{Generation on the $48$ tables: Elo with $95\%$ bootstrap intervals and per-metric means over the tables each method completed ($n$). Utility is relative to training on real rows; $\alpha$-precision, $\beta$-recall, and authenticity follow \citet{alaa2022faithful}; W$_1$ (median over tables) and TV are column-level distances and $|\Delta\rho|$ the mean absolute difference of pairwise correlations. Time is the median wall-clock per table in seconds (training plus sampling for per-table models; inference for the others), measured on one GPU. The Elo tasks are table $\times$ \{utility, $\alpha$-precision, $\beta$-recall, authenticity\}. (t) = run with the hyperparameters tuned in the original paper, which provides no default. $\dagger$: not completed on every table; missing tables are filled with column resampling in the Elo.}
\label{tab:full-gen}
\begin{center}
\begin{footnotesize}
\resizebox{\linewidth}{!}{%
\begin{tabular}{@{}lrlrrrrrrrr@{}}
\toprule
Method & $n$ & Elo & Utility $\uparrow$ & $\alpha$-prec.\ $\uparrow$ & $\beta$-rec.\ $\uparrow$ & Auth.\ $\uparrow$ & W$_1$ $\downarrow$ & TV $\downarrow$ & $|\Delta\rho|$ $\downarrow$ & Time (s) \\
\midrule
\ourmethod{} & 48 & 1099 [1063, 1126] & 0.914 & 0.790 & 0.408 & 0.569 & 0.054 & 0.038 & 0.038 & 25 \\
TabNAT & 48 & 1093 [1057, 1121] & 0.899 & 0.805 & 0.421 & 0.559 & 0.046 & 0.023 & 0.030 & 1696 \\
TabSyn & 48 & 1060 [1030, 1079] & 0.864 & 0.738 & 0.385 & 0.590 & 0.059 & 0.055 & 0.037 & 369 \\
TabDiff (t) & 48 & 1059 [1031, 1086] & 0.886 & 0.730 & 0.380 & 0.591 & 0.060 & 0.022 & 0.042 & 2119 \\
TabPFN-3 (chained) & 48 & 1055 [1027, 1088] & 0.891 & 0.662 & 0.353 & 0.632 & 0.079 & 0.081 & 0.059 & 571 \\
ForestDiffusion$^\dagger$ & 42 & 1051 [1023, 1081] & 0.886 & 0.770 & 0.409 & 0.568 & 0.059 & 0.061 & 0.030 & 1181 \\
TabICLv2 (chained) & 48 & 1016 [992, 1042] & 0.880 & 0.590 & 0.304 & 0.648 & 0.169 & 0.062 & 0.081 & 72 \\
TabDDPM (t)$^\dagger$ & 45 & 1004 [968, 1046] & 0.818 & 0.660 & 0.371 & 0.618 & 0.263 & 0.095 & 0.137 & 241 \\
GReaT$^\dagger$ & 41 & 1001 [980, 1024] & 0.795 & 0.767 & 0.391 & 0.594 & 0.274 & 0.169 & 0.109 & 8106 \\
TVAE (t)$^\dagger$ & 47 & 996 [978, 1016] & 0.795 & 0.670 & 0.320 & 0.628 & 0.123 & 0.123 & 0.081 & 529 \\
TabEBM$^\dagger$ & 32 & 988 [959, 1018] & 0.920 & 0.641 & 0.408 & 0.536 & 0.050 & 0.059 & 0.019 & 183 \\
TabPFGen$^\dagger$ & 31 & 964 [935, 994] & 0.851 & 0.647 & 0.405 & 0.549 & 0.077 & 0.059 & 0.042 & 131 \\
STaSy (t)$^\dagger$ & 47 & 961 [928, 989] & 0.795 & 0.530 & 0.264 & 0.698 & 0.192 & 0.138 & 0.072 & 1871 \\
Gaussian mixture$^\dagger$ & 47 & 945 [921, 970] & 0.777 & 0.521 & 0.251 & 0.689 & 0.179 & 0.056 & 0.028 & 11 \\
CTGAN (t) & 48 & 937 [912, 960] & 0.727 & 0.480 & 0.226 & 0.714 & 0.161 & 0.113 & 0.113 & 306 \\
LimiX (chained)$^\dagger$ & 46 & 889 [846, 928] & 0.649 & 0.345 & 0.133 & 0.830 & 0.683 & 0.277 & 0.153 & 219 \\
Column resampling & 48 & 886 [856, 918] & 0.518 & 0.425 & 0.197 & 0.743 & 0.017 & 0.017 & 0.203 & 0 \\
\bottomrule
\end{tabular}}
\end{footnotesize}
\end{center}
\end{table}

\begin{table}[ht]
\caption{Imputation on the $48$ tables ($30\%$ of the feature cells hidden completely at random; $3$ mask seeds). The Elo tasks are table $\times$ \{RMSE on numerical cells, error rate on categorical cells, downstream utility\} (125 tasks); the second Elo uses only the 41 tables every method completed (107 tasks). Error is the primary score of a table (min-max--scaled RMSE where the table has numerical cells, misclassification rate otherwise), averaged over the tables each method completed ($n$); rank is the mean rank over tables. RMSE and error rate are also reported separately over the tables with numerical and with categorical cells (counts in parentheses); cells with RMSE above $5$ (out-of-range outputs) are excluded from the means and counted. Downstream is the utility of a predictor trained on the imputed table relative to one trained on the complete table. (t) = tuned. $\dagger$: not completed on every table; missing tables are filled with the mean/mode imputer in the first Elo.}
\label{tab:full-imp}
\begin{center}
\begin{footnotesize}
\resizebox{\linewidth}{!}{%
\begin{tabular}{@{}lrllrrrrr@{}}
\toprule
Method & $n$ & Elo & Elo (common) & Error $\downarrow$ & Rank $\downarrow$ & RMSE $\downarrow$ (num.) & Error rate $\downarrow$ (cat.) & Downstream $\uparrow$ \\
\midrule
\ourmethod{} & 48 & 1458 [1389, 1521] & 1458 [1382, 1539] & 0.129 & 1.88 & 0.108 (44) & 0.314 (33) & 0.962 \\
TabICLv2 (per cell) & 48 & 1201 [1168, 1238] & 1213 [1167, 1258] & 0.146 & 4.38 & 0.126 (44) & 0.375 (33) & 0.839 \\
ReMasker (t) & 48 & 1108 [1071, 1145] & 1113 [1075, 1159] & 0.157 & 3.58 & 0.121 (44) & 0.467 (33) & 0.829 \\
HyperImpute & 48 & 1073 [1040, 1100] & 1071 [1043, 1103] & 0.163 & 5.19 & 0.129 (44) & 0.473 (33) & 0.828 \\
ReMasker & 48 & 1066 [1034, 1099] & 1072 [1033, 1114] & 0.167 & 4.90 & 0.133 (44) & 0.483 (33) & 0.827 \\
DiffPuter & 48 & 1063 [1019, 1104] & 1086 [1045, 1125] & 0.204 (3 excl.) & 7.31 & 0.170 (44) & 0.466 (33) & 0.907 \\
ForestDiffusion$^\dagger$ & 45 & 1043 [1008, 1087] & 1037 [1003, 1078] & 0.180 & 8.80 & 0.149 (41) & 0.428 (31) & 0.923 \\
TabNAT & 48 & 1029 [992, 1076] & 1023 [978, 1061] & 0.177 & 9.08 & 0.152 (44) & 0.413 (33) & 0.910 \\
Mean/mode & 48 & 992 [950, 1034] & 994 [942, 1046] & 0.195 & 10.96 & 0.165 (44) & 0.443 (33) & 0.904 \\
TabPFN-3 (per cell)$^\dagger$ & 42 & 973 [937, 1010] & 976 [931, 1018] & 0.203 & 10.62 & 0.175 (39) & 0.440 (30) & 0.813 \\
LimiX (per cell) & 48 & 957 [913, 987] & 952 [914, 993] & 0.175 & 6.85 & 0.140 (44) & 0.499 (33) & 0.816 \\
GAIN (t) & 48 & 821 [789, 856] & 823 [783, 860] & 0.189 & 10.33 & 0.154 (44) & 0.522 (33) & 0.815 \\
MICE (t) & 48 & 815 [774, 847] & 806 [774, 840] & 0.194 (1 excl.) & 10.54 & 0.158 (44) & 0.530 (33) & 0.817 \\
GAIN & 48 & 777 [745, 808] & 776 [737, 811] & 0.192 & 11.08 & 0.157 (44) & 0.527 (33) & 0.815 \\
MICE & 48 & 631 [580, 678] & 597 [533, 656] & 0.281 & 13.60 & 0.252 (44) & 0.555 (33) & 0.810 \\
\bottomrule
\end{tabular}}
\end{footnotesize}
\end{center}
\end{table}

\begin{table}[ht]
\caption{Class-conditional generation on the $36$ classification tables: Elo with $95\%$ bootstrap intervals and means over the tables each method completed ($n$). The Elo tasks are table $\times$ \{utility, class-macro $\alpha$-precision, class-macro $\beta$-recall, label-consistency gap\}, where the gap is the absolute difference between the label consistency of the synthetic rows and that of held-out real rows. Label consistency is the fraction of synthetic rows assigned to the requested class by the scoring predictors trained on real rows, divided by the same fraction on held-out real rows ($1$ = as separable as real data; above $1$ = classes separated more than in the real data); macro is the unnormalized macro-average over classes; agreement is the fraction of rows whose emitted label equals the requested class. TV and W$_1$ (median) are per-class distances, macro-averaged. (t) = the generation model with the hyperparameters tuned in its original paper. $\dagger$: not completed on every table; a missing table is scored just below the worst method on that table in the Elo.}
\label{tab:full-cond}
\begin{center}
\begin{footnotesize}
\resizebox{\linewidth}{!}{%
\begin{tabular}{@{}lrlrrrrrrrr@{}}
\toprule
Method & $n$ & Elo & Utility $\uparrow$ & $\alpha$-prec.\ $\uparrow$ & $\beta$-rec.\ $\uparrow$ & Consist.\ (rel.) & Consist.\ (macro) & Agreement & TV $\downarrow$ & W$_1$ $\downarrow$ \\
\midrule
\ourmethod{} & 36 & 1163 [1135, 1191] & 0.926 & 0.624 & 0.349 & 0.983 & 0.675 & 1.00 & 0.064 & 0.096 \\
TabPFN-3 (chained) & 36 & 1144 [1116, 1170] & 0.917 & 0.591 & 0.349 & 0.986 & 0.676 & 1.00 & 0.086 & 0.103 \\
TabNAT & 36 & 1122 [1091, 1157] & 0.881 & 0.675 & 0.381 & 0.972 & 0.671 & 1.00 & 0.048 & 0.117 \\
ForestDiffusion$^\dagger$ & 32 & 1076 [1040, 1109] & 0.908 & 0.638 & 0.386 & 1.016 & 0.738 & 1.00 & 0.081 & 0.093 \\
TabEBM$^\dagger$ & 32 & 1053 [1015, 1089] & 0.922 & 0.526 & 0.363 & 1.008 & 0.723 & 1.00 & 0.078 & 0.073 \\
TabICLv2 (chained) & 36 & 1019 [991, 1048] & 0.911 & 0.449 & 0.267 & 0.941 & 0.665 & 1.00 & 0.084 & 0.367 \\
TabDDPM (t)$^\dagger$ & 33 & 989 [953, 1033] & 0.826 & 0.570 & 0.345 & 0.909 & 0.669 & 1.00 & 0.124 & 0.546 \\
GReaT$^\dagger$ & 30 & 964 [932, 999] & 0.849 & 0.656 & 0.376 & 0.953 & 0.656 & 1.00 & 0.168 & 0.271 \\
TabPFGen$^\dagger$ & 31 & 951 [909, 990] & 0.851 & 0.507 & 0.339 & 1.093 & 0.818 & 1.00 & 0.088 & 0.176 \\
CTGAN (t)$^\dagger$ & 35 & 775 [732, 810] & 0.597 & 0.415 & 0.206 & 0.794 & 0.466 & 0.70 & 0.172 & 0.292 \\
LimiX (chained)$^\dagger$ & 34 & 743 [708, 777] & 0.729 & 0.272 & 0.122 & 0.881 & 0.573 & 1.00 & 0.292 & 0.805 \\
\midrule
Real rows (held out) & 36 & -- & -- & -- & -- & 1.000 & 0.709 & -- & -- & -- \\
\bottomrule
\end{tabular}}
\end{footnotesize}
\end{center}
\end{table}

\clearpage
\paragraph{Target-set scenarios, all methods.}
Table~\ref{tab:targets-full} extends Table~\ref{tab:targets} to the nine methods.

\begin{table}[h]
\caption{Per-cell error on held-out rows relative to a mean/mode imputer (median over tables; below $1$ = better than mean/mode), as the target set varies: the MCAR rate, the mechanism at $30\%$, and the number $|S|$ of whole hidden columns. $24$ tables on which every method completed (the $15$ smallest and $9$ medium-sized ones; $23$ for $|S|{=}4$). Best per row in bold.}
\label{tab:targets-full}
\begin{center}
\begin{footnotesize}
\resizebox{\linewidth}{!}{%
\begin{tabular}{@{}llccccccccc@{}}
\toprule
 & & \ourmethod{} & TabNAT & ForestDiff. & DiffPuter & GAIN & MICE & TabPFN-3 & TabICLv2 & LimiX \\
\midrule
\multirow{4}{*}{MCAR rate} & $10\%$ & \textbf{0.622} & 0.777 & 1.000 & 0.901 & 1.001 & 1.157 & 0.912 & 0.729 & 0.872 \\
 & $30\%$ & \textbf{0.675} & 0.860 & 1.082 & 0.966 & 1.028 & 1.233 & 1.098 & 0.812 & 0.916 \\
 & $50\%$ & \textbf{0.770} & 0.932 & 1.113 & 0.955 & 1.141 & 1.237 & 1.212 & 0.921 & 0.966 \\
 & $70\%$ & \textbf{0.843} & 0.993 & 1.182 & 1.005 & 1.290 & 1.296 & 1.328 & 0.974 & 1.016 \\
\midrule
\multirow{2}{*}{$30\%$} & MAR & \textbf{0.628} & 0.887 & 1.065 & 0.887 & 1.064 & 1.212 & 1.019 & 0.807 & 0.931 \\
 & MNAR & \textbf{0.704} & 0.871 & 1.012 & 0.866 & 1.063 & 1.143 & 1.068 & 0.824 & 0.910 \\
\midrule
\multirow{3}{*}{Columns} & $|S|{=}1$ & \textbf{0.523} & 0.780 & 1.019 & 0.826 & 1.010 & 1.193 & 0.785 & 0.575 & 1.551 \\
 & $|S|{=}2$ & \textbf{0.600} & 0.895 & 1.053 & 0.822 & 1.001 & 1.158 & 0.992 & 0.776 & 1.702 \\
 & $|S|{=}4$ & \textbf{0.767} & 0.960 & 1.054 & 0.912 & 1.166 & 1.235 & 1.054 & 0.888 & 1.655 \\
\midrule
\multicolumn{2}{@{}l}{Tables with results (of $48$)} & 48 & 48 & 36 & 42 & 48 & 48 & 28 & 48 & 48 \\
\bottomrule
\end{tabular}}
\end{footnotesize}
\end{center}
\end{table}

\end{document}